\def\centereps#1#2#3{\vskip#2\relax\centerline{\hbox to#1{\special
  {eps:#3 x=#1, y=#2}\hfil}}}
\def\centerwmf#1#2#3{\vskip#2\relax\centerline{\hbox to#1{\special
  {wmf:#3 x=#1, y=#2}\hfil}}}
\newcommand{\pr}{\mathrm{Pr}}
\begin{document}
\date{}
\title{Robustness and Generalization}
\author{\name Huan Xu \email huan.xu@mail.utexas.edu
\\
\addr Department of Electrical and Computer Engineering \\
the University of Texas at Austin, TX, USA \AND \name Shie
Mannor \email shie@technion.ee.ac.il \\
\addr Department of  Electrical
 Engineering\\
Technion, Israel Institute of Technology}

\editor{n/a} \maketitle

\begin{abstract}
We derive generalization bounds for learning algorithms based on
their robustness: the property that if a testing sample is
``similar'' to a training sample, then the testing error is close to
the training error. This provides a novel approach, different from
the complexity or stability arguments, to study generalization
 of learning algorithms. We further show that a weak notion
of robustness is both sufficient and necessary for generalizability,
which implies that robustness is a fundamental property for learning
algorithms to work.
\end{abstract}

\section{Introduction}
The key issue in  the task of learning   from a set of observed
samples  is the estimation of the {\em risk} (i.e., generalization
error) of learning algorithms. Typically, its empirical measurement
(i.e., training error)  provides an optimistically biased
estimation, especially when the number of training samples is small.
Several approaches have been proposed to bound the deviation of the
risk from its empirical measurement, among which methods based on
uniform convergence
 and stability   are most widely used.

Uniform convergence of empirical quantities to their
mean~\citep[e.g.,][]{Vapnik74,Vapnik91}  provides ways to bound the
gap between the expected risk and the empirical risk
 by the complexity of the hypothesis set. Examples to complexity measures are the
Vapnik-Chervonenkis (VC) dimension
\citep[e.g.,][]{Vapnik91,Evgeniou00}, the fat-shattering dimension
\citep[e.g.,][]{AlonBendavidCesabianchiHaussler97,Bartlett98}, and
the Rademacher complexity \citep{Bartlett02,Bartlett05}. Another
well-known approach is based on {\em stability}. An algorithm is
stable if its output remains ``similar'' for different sets of
training samples that are identical up to removal or change of a
single sample. The first results that relate stability to
generalizability track back to \citet{DevroyeWagner79} and
\citet{DevroyeWagner79b}. Later, McDiarmid's \citep{McDiarmid89},
 concentration inequalities facilitated new bounds on
generalization error
\citep[e.g.,][]{Bousquet02,PoggioRifkinMukherjeeNiyogi04,MukherjeeNiyogiPoggioRifkin06}.

In this paper we explore a different approach which we term {\em
algorithmic robustness}. Briefly speaking, an algorithm is robust if
its solution has the following property: it
 achieves ``similar'' performance on a
testing sample and a training sample that are ``close''. This notion
of robustness is rooted in {\em robust optimization}
\citep{Ben-tal98,Ben-tal99,Bertsimas04} where a decision maker aims
to find a solution $x$ that minimizes a (parameterized) cost
function $f(x, \xi)$ with the knowledge that the unknown true
parameter $\xi$ may deviate from the observed parameter $\hat{\xi}$.
Hence, instead of solving $\min_x f(x, \hat{\xi})$ one solves
$\min_x [\max_{\tilde{\xi}\in \Delta}f(x, \tilde{\xi})]$, where
$\Delta$ includes all possible realizations of $\xi$. Robust
optimization was introduced in machine learning tasks to handle
exogenous noise
\citep[e.g.,][]{Bhattacharyya04b,Shivaswamy06,Globerson06}, i.e.,
the learning algorithm only has access to inaccurate observation of
training samples. Later on,
\citet{XuCaramanisMannorSVM1-08,XuCaramanisMannor-Lasso-NIPS} showed
that both Support Vector Machine(SVM) and Lasso have robust
optimization interpretation, i.e., they can be reformulated as
\[\min_{h\in \mathcal{H}} \max_{(\delta_1,\cdots,\delta_n) \in \Delta} \sum_{i=1}^n l(h, z_i+\delta_i),\]
for some $\Delta$. Here $z_i$ are the observed training samples and
$l(\cdot,\cdot)$ is the loss function (hinge-loss for SVM, and
squared loss for Lasso), which means that SVM and Lasso essentially
minimize the empirical error under the worst possible perturbation.
Indeed, as the authors of
\citet{XuCaramanisMannorSVM1-08,XuCaramanisMannor-Lasso-NIPS}
showed,   this reformulation leads to requiring that the loss of a
sample ``close'' to $z_i$ is small, which further implies
statistical consistency of these two algorithms. In this paper we
adopt this approach and study the (finite sample) generalization
ability of learning algorithms by investigating
 the loss of  learned hypotheses on samples
that   slightly deviate  from training samples.

Of special interest  is that robustness is more than just another
way to establish generalization bounds. Indeed, we show that a
weaker notion of robustness is a {\em necessary and sufficient}
condition of (asymptotic) generalizability of (general) learning
algorithms. While it is known    having a finite VC-dimension
\citep{Vapnik91} or equivalently being $\mathrm{CVEEE}_{loo}$ stable
\citep{MukherjeeNiyogiPoggioRifkin06} is necessary and sufficient
 for the Empirical Risk Minimization (ERM) to generalize, much less
 is known in the general case.
 Recently, \citet{huji} proposed a weaker notion
of stability that is necessary and sufficient for a learning
algorithm to be  consistent and generalizing, provided that the
problem itself is {\em learnable}. However, learnability requires
that the {\em convergence
 rate is uniform} with respect to all distributions, and is hence a
fairly strong assumption. In particular,   the standard supervised
learning setup where the hypothesis set is the set of measurable
functions is  {\em not} learnable since no algorithm can achieve a
uniform convergence rate  \citep[cf][]{DevroyeGyorfiLugosi96}.
Indeed,
 as the authors of \citet{huji} stated, for supervised learning problem
learnability is equivalent to the generalizability of ERM, and hence
reduce to the aforementioned results on ERM algorithms.

 In particular, our main contributions are the
following:
\begin{enumerate}
  \item We propose a notion of algorithmic robustness.
  Algorithmic robustness is a desired property for a learning algorithm since it implies a lack of sensitivity to (small) disturbances in the training data.
  \item Based on the notion of algorithmic robustness, we derive
  generalization bound for IID samples as well as samples drawn
  according to a Markovian chain.
  \item To
  illustrate the applicability of the notion of algorithmic
  robustness, we provide some examples of robust algorithms,
 including SVM, Lasso, feed-forward neural networks and
  PCA.
  \item We propose a weaker notion of robustness and show that it is
  both necessary and sufficient for a learning algorithm to
  generalize. This implies that robustness is an essential property
  needed for a learning algorithm to work.
\end{enumerate}

Note that while stability and robustness are similar on an intuitive
level, there is a difference between the two: stability requires
that nearly identical training sets with a single sample removed
lead to similar prediction rules, whereas robustness requires that a
prediction rule has comparable performance if tested on a sample
close to a training sample.

This paper is organized as follows. We define the notion of
robustness in Section~\ref{sec.define}, and prove generalization
bounds for robust algorithms in Section~\ref{sec.bounds}. In
Section~\ref{sec.pseudo} we propose a relaxed notion of robustness,
which is termed as pseudo-robustness, and show corresponding
generalization bounds. Examples of learning algorithms that are
robust or pseudo-robust are provided in Section~\ref{sec.example}.
Finally, we show that robustness is necessary and sufficient for
generalizability in Section~\ref{sec.equivalence}.
\subsection{Preliminaries}\label{sec.model}
We consider the
following general learning model:
 a set of  training samples are
given, and the goal
 is to pick a
hypothesis from a hypothesis set.  Unless otherwise mentioned,
throughout this paper the size of training set is fixed as $n$.
Therefore, we drop the dependence of parameters on the number of
training samples, while it should be understood that parameters may
vary with the number of training samples.
 We use
$\mathcal{Z}$ and $\mathcal{H}$ to denote the set from which each
sample is drawn, and
 the hypothesis set, respectively. Throughout the paper we use
 $\mathbf{s}$ to denote the training sample set consists of $n$
 training samples $(s_1,\cdots, s_n)$.
  A learning algorithm
 $\mathcal{A}$ is  thus a
mapping from $\mathcal{Z}^n$ to $\mathcal{H}$. We use
$\mathcal{A}_{\mathbf{s}}$ to represent the hypothesis learned
(given
 training  set $\mathbf{s}$). For each hypothesis $h\in \mathcal{H}$
 and a point $z\in \mathcal{Z}$, there is an associated loss $l(h,
 z)$. We ignore the issue of measurability
and further assume that $l(h,z)$ is non-negative and upper-bounded
uniformly by a scalar $M$.

In the special case of supervised learning, the sample space can be
decomposed as $\mathcal{Z}=\mathcal{Y}\times\mathcal{X}$, and the
goal is to learn a mapping  from $\mathcal{X}$ to $\mathcal{Y}$,
i.e., to predict the y-component given x-component. We hence use
$\mathcal{A}_{\mathbf{s}}(x)$ to represent the prediction of $x\in
\mathcal{X}$ if trained on $\mathbf{s}$.
 We call $\mathcal{X}$ the input space
and $\mathcal{Y}$ the output space. The output space can either be
$\mathcal{Y}=\{-1, +1\}$ for a classification problem, or
$\mathcal{Y}=\mathbb{R}$ for a regression problem.
  We use
$_{|x}$ and $_{|y}$ to denote the $x$-component and $y$-component of
a point. For example,
 $s_{i|x}$ is the $x$-component of $s_i$.
 To simplify notations, for a scaler $c$, we use $[c]^+$ to
 represent its non-negative part, i.e.,
  $[c]^+\triangleq
 \max(0,c)$.

We recall the following standard notion of covering number  from
\cite{Vaart2000}.
\begin{definition}[cf. \cite{Vaart2000}]For a metric space $S, \rho$ and
$T\subset S$ we say that $\hat{T}\subset S$ is an {\em
$\epsilon$-cover} of $T$, if $\forall t\in T$, $\exists \hat{t}\in
\hat{T}$ such that $\rho(t, \hat{t})\leq \epsilon$. The {\em
$\epsilon$-covering number} of $T$ is
\[\mathcal{N}(\epsilon, T, \rho)=\min\{|\hat{T}|\,: \hat{T} \mbox{ is an }\epsilon-\mbox{cover of }T\}.\]
\end{definition}
\section{Robustness of Learning Algorithms}\label{sec.define}
Before providing a precise definition of what we mean by
``robustness'' of an algorithm, we provide some motivating examples
which share a common property: if a testing sample is close to a
training sample, then the testing error is also close, a property we
will later formalize as ``robustness''.

We first consider  large-margin classifiers: Let the loss function
be $l(A_{\mathbf{s}}, z)= \mathbf{1}(A_{\mathbf{s}}(z_{|x})\not=
z_{|y} )$. Fix $\gamma>0$. An algorithm $\mathcal{A}_{\mathbf{s}}$
has a margin $\gamma$ if for $j=1,\cdots, n$
\[\mathcal{A}_{\mathbf{s}}(x)=\mathcal{A}_{\mathbf{s}}(s_{j|x});\quad
\forall x: \|x-s_{j|x}\|_2 < \gamma.
\] That is, any training sample is at least $\gamma$ away from the
classification boundary.

\begin{example}\label{exm.marginmoti}Fix $\gamma>0$ and put $K=2\mathcal{N}(\gamma/2, \mathcal{X}, \|\cdot\|_2)$.
If $\mathcal{A}_{\mathbf{s}}$ has a margin $\gamma$,  then
$\mathcal{Z}$ can be partitioned into $K$ disjoint sets, denoted by
$\{C_i\}_{i=1}^K$, such that if $s_j$ and $z\in \mathcal{Z}$ belong
to a same $C_i$, then $|l(\mathcal{A}_{\mathbf{s}},
s_j)-l(\mathcal{A}_{\mathbf{s}}, z)|=0$.
\end{example}
\begin{proof}By definition of covering number, we can partition
$\mathcal{X}$ into $\mathcal{N}(\gamma/2, \mathcal{X}, \|\cdot\|_2)$
subsets (denoted $\hat{X}_i$) such that each subset has a diameter
less or equal to $\gamma$. Further, $\mathcal{Y}$ can be partitioned
to $\{-1\}$ and $\{+1\}$. Thus, we can partition $\mathcal{Z}$ into
$2\mathcal{N}(\gamma/2, \mathcal{X}, \|\cdot\|_2)$ subsets such that
if $z_1, z_2$ belong to a same subset, then $y_{1|y}=y_{2|y}$ and
$\|x_{1|y}-x_{2|y}\|\leq \gamma$. By definition of margin, this
guarantees that if $s_j$ and $z\in \mathcal{Z}$ belong to a same
$C_i$, then $|l(\mathcal{A}_{\mathbf{s}},
s_j)-l(\mathcal{A}_{\mathbf{s}}, z)|=0$.
\end{proof}

The next example is a linear regression algorithm. Let the loss
function be  $l(A_{\mathbf{s}}, z)= |z_{|y}-
A_{\mathbf{s}}(z_{|x})|$, and let
 $\mathcal{X}$
be a bounded subset of $\mathbb{R}^m$ and fix $c>0$. The
norm-constrained linear regression algorithm is
\begin{equation}\label{equ.NCLR}
\begin{split}&\mathcal{A}_{\mathbf{s}} =\min_{w\in \mathbb{R}^m: \|w\|_2 \leq c} \sum_{i=1}^n |s_{i|y}-w^\top
s_{i|x}|,
\end{split}
\end{equation}
i.e., minimizing the empirical error among all linear classifiers
whose norm is bounded.
\begin{example}Fix $\epsilon>0$ and put $K=\mathcal{N}(\epsilon/2, \mathcal{X}, \|\cdot\|_2)\times \mathcal{N}(\epsilon/2,\mathcal{Y}, |\cdot|)$. Consider the algorithm
as in~(\ref{equ.NCLR}).  The set $\mathcal{Z}$ can be partitioned
into $K$ disjoint sets, such that if $s_j$ and $z\in \mathcal{Z}$
belong to a same $C_i$, then
\[|l(\mathcal{A}_{\mathbf{s}}, s_j)-l(\mathcal{A}_{\mathbf{s}},
z)|\leq (c+1)\epsilon.\]
\end{example}
\begin{proof}Similarly to the previous example, we can partition
$\mathcal{Z}$ to $\mathcal{N}(\epsilon/2, \mathcal{X},
\|\cdot\|_2)\times \mathcal{N}(\epsilon/2,\mathcal{Y}, |\cdot|)$
subsets, such that if $z_1, z_2$ belong to a same $C_i$, then
$\|z_{1|x}-z_{2|x}\|_2\leq \epsilon$, and $|z_{1|y}-z_{2|y}| \leq
\epsilon$. Since $\|w\|_2 \leq c$, we have
\begin{equation*}\begin{split}
\left|l(w  , z_1)-l(w (\mathbf{s}), z_2)\right| =&\left||z_{1|y}
-w^\top z_{1|x}|-|z_{2|y} -w^\top
z_{2|x}|\right|\\
\leq &\left|(z_{1|y} -w^\top z_{1|x})-(z_{2|y} -w^\top
z_{2|x})\right|\\
\leq &|z_{1|y}-z_{2|y}|+\|w\|_2
\|z_{1|x}-z_{2|x}\|_2\\
\leq & (1+c)\epsilon,
\end{split}\end{equation*}whenever $z_1, z_2$ belong to a same
$C_i$.
\end{proof}

The two motivating examples both share a property: we can partition
the sample set into finite subsets, such that if a new sample falls
into the same subset as a testing sample, then the loss of the
former is close to the loss of the latter. We call an algorithm
having this property ``robust.''

\begin{definition}\label{def.robustalgorithm}
Algorithm $\mathcal{A}$ is $(K,\, \epsilon(\mathbf{s}))$ robust if
 $\mathcal{Z}$ can be partitioned into $K$ disjoint
sets, denoted as $\{C_i\}_{i=1}^K$, such that  $\forall s\in
\mathbf{s}$,
\begin{equation}\label{equ.robust}s, z
\in C_i,\quad \Longrightarrow \quad \left|l(\mathcal{A}_\mathbf{s},
s)-l(\mathcal{A}_\mathbf{s}, z)\right| \leq
\epsilon(\mathbf{s}).\end{equation}
\end{definition}
In the definition, both $K$ and the partition sets $\{C_i\}_{i=1}^K$
do not depend on the training set $\mathbf{s}$. Note that the
definition of robustness requires that~(\ref{equ.robust}) holds {\em
for
 every} training
sample.
 Indeed, we can relax the definition, so that
the condition needs only hold for a subset of training samples. We
call an algorithm having this property ``pseudo robust''. See
Section~\ref{sec.pseudo} for details.

\section{Generalization of Robust
Algorithms}\label{sec.bounds} In this section we investigate
generalization property of robust algorithms. In particular, in the
following subsections we derive PAC bounds  for robust algorithms
under three different conditions: (1) The ubiquitous learning setup
where the samples are i.i.d.\ and the goal of learning is to
minimize expected loss. (2) The learning goal is to minimize
quantile loss. (3) The samples are generated according to a
(Doeblin) Markovian chain.   Indeed, the fact that we can provide
results in (2) and (3) indicates the fundamental nature of
robustness as a property of learning algorithms.
\subsection{IID samples and expected loss}
In this section, we consider the standard learning setup, i.e., the
sample set $\mathbf{s}$ consists of $n$ i.i.d.\ samples generated by
an unknown distribution $\mu$, and the goal of learning is to
minimize expected test loss. Let $\hat{l}(\cdot)$ and
$l_{\mathrm{emp}}(\cdot)$ denote the expected error and the training
error, i.e.,
\[\hat{l}(\mathcal{A}_\mathbf{s})\triangleq \mathbb{E}_{z\sim \mu} l(\mathcal{A}_\mathbf{s}, z);\quad l_{\mathrm{emp}}(\mathcal{A}_\mathbf{s}) \triangleq \frac{1}{n}\sum_{s_i\in \mathbf{s}} l(\mathcal{A}_{\mathbf{s}}, s_i).\]
 Recall that the loss function $l(\cdot,\cdot)$ is upper bounded by
$M$.
\begin{theorem}\label{thm.main} If $\mathbf{s}$ consists of $n$ i.i.d. samples, and  $\mathcal{A}$
is $(K, \epsilon(\mathbf{s}))$-robust, then for any $\delta>0$, with
probability at least $1-\delta$,
\[\left|\hat{l}(\mathcal{A}_\mathbf{s})-l_{\mathrm{emp}}(\mathcal{A}_{\mathbf{s}})\right|\leq \epsilon(s)+M
\sqrt{\frac{2K\ln 2 + 2\ln(1/\delta)}{n}}. \]
\end{theorem}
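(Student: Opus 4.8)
The plan is to relate the two averages through the empirical distribution induced by the fixed partition $\{C_i\}_{i=1}^K$. For each cell let $N_i$ denote the number of training samples falling in $C_i$ and let $\mu(C_i)$ denote its true probability mass, so that $\sum_i N_i = n$ and $\sum_i \mu(C_i)=1$. The natural intermediate quantity is $\sum_{i=1}^K \tfrac{N_i}{n}\,\mathbb{E}_{z\sim\mu}[l(\mathcal{A}_{\mathbf{s}},z)\mid z\in C_i]$, which replaces the true cell weights $\mu(C_i)$ in the expected loss by the empirical weights $N_i/n$. Inserting and subtracting this term, I would split $|\hat{l}(\mathcal{A}_{\mathbf{s}})-l_{\mathrm{emp}}(\mathcal{A}_{\mathbf{s}})|$ into (i) a weight-mismatch term comparing $\mu(C_i)$ with $N_i/n$, and (ii) a within-cell term comparing the conditional expected loss on $C_i$ with the average empirical loss of the training samples lying in $C_i$.

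First I would dispose of the within-cell term using robustness directly. For any cell $C_i$ containing a training sample $s_j$, every $z\in C_i$ satisfies $|l(\mathcal{A}_{\mathbf{s}},z)-l(\mathcal{A}_{\mathbf{s}},s_j)|\le \epsilon(\mathbf{s})$ by Definition~\ref{def.robustalgorithm}, so the conditional expectation $\mathbb{E}[l(\mathcal{A}_{\mathbf{s}},z)\mid z\in C_i]$ is within $\epsilon(\mathbf{s})$ of each such $l(\mathcal{A}_{\mathbf{s}},s_j)$. Writing the per-cell discrepancy as $\tfrac{1}{n}\sum_{s_j\in C_i}\bigl(\mathbb{E}[l\mid z\in C_i]-l(\mathcal{A}_{\mathbf{s}},s_j)\bigr)$, summing over cells and samples bounds the within-cell term by $\tfrac{1}{n}\sum_i N_i\,\epsilon(\mathbf{s})=\epsilon(\mathbf{s})$; empty cells contribute nothing, since both their empirical weight and their set of training samples are empty. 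Next, since $l$ is non-negative and bounded by $M$, every conditional expectation lies in $[0,M]$, so the weight-mismatch term is at most $M\sum_{i=1}^K \bigl|\mu(C_i)-N_i/n\bigr|$, that is, $M$ times the $L_1$ deviation of the empirical measure over the partition from the true measure.

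The main ingredient---and the step I expect to carry the real content---is controlling this $L_1$ deviation. The vector $(N_1,\dots,N_K)$ is multinomial with parameters $n$ and $(\mu(C_1),\dots,\mu(C_K))$, and the Bretagnolle--Huber--Carol inequality gives $\pr\bigl(\sum_{i=1}^K|N_i/n-\mu(C_i)|\ge\lambda\bigr)\le 2^K e^{-n\lambda^2/2}$. Setting the right-hand side equal to $\delta$ and solving the resulting elementary equation yields $\lambda=\sqrt{(2K\ln 2+2\ln(1/\delta))/n}$, so with probability at least $1-\delta$ the weight-mismatch term is at most $M\sqrt{(2K\ln 2+2\ln(1/\delta))/n}$. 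Combining this with the deterministic bound $\epsilon(\mathbf{s})$ on the within-cell term gives the claimed inequality.

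The crucial structural point making this work is that the partition $\{C_i\}_{i=1}^K$ is fixed independently of $\mathbf{s}$, exactly as required by the definition of robustness. This is what lets me apply a single $K$-cell multinomial concentration bound without any union bound over the (possibly uncountable) hypothesis class: all of the complexity dependence is funneled into the combinatorial factor $2^K$, equivalently the $\sqrt{K/n}$ rate. If the partition were allowed to depend on the data, this argument would break, which is why I would take care to use only the fixed partition and the within-cell robustness guarantee, never a data-dependent refinement.
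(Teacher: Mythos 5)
Your proposal is correct and follows essentially the same route as the paper's own proof: the same triangle-inequality decomposition through $\sum_{i=1}^K \frac{|N_i|}{n}\,\mathbb{E}[l(\mathcal{A}_{\mathbf{s}},z)\mid z\in C_i]$, the same use of the robustness condition to bound the within-cell term by $\epsilon(\mathbf{s})$, and the same Bretagnolle--Huber--Carol multinomial concentration for the $L_1$ weight-mismatch term. Your closing remark about the partition being fixed independently of $\mathbf{s}$ correctly identifies why no further union bound is needed.
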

\begin{proof}
 Let $N_i$  be the set of index of points of
 $\mathbf{s}$
 that fall into the $C_i$.
  Note that  $(|N_1|, \cdots,
|N_K|)$  is an IID multinomial random variable with parameters $n$
and $(\mu(C_1),\cdots, \mu(C_K))$. The following holds by the
Breteganolle-Huber-Carol inequality \citep[cf Proposition A6.6
of][]{Vaart2000}:
\[\mathrm{Pr}\left\{\sum_{i=1}^{K}\left|\frac{|N_i|}{n} -\mu(C_i)\right| \geq \lambda \right\}\leq 2^{K}\exp (\frac{-n\lambda^2}{2}).\]
Hence, the following holds with probability at least $1-\delta$,
\begin{equation}\label{equ.proofmain}\sum_{i=1}^{K}\left|\frac{|N_i|}{n} -\mu(C_i)\right| \leq
\sqrt{\frac{2K\ln 2 + 2\ln(1/\delta)}{n}}.\end{equation}

We have
\begin{equation}\label{equ.proofofmain}\begin{split}&\left|\hat{l}(\mathcal{A}_\mathbf{s})-l_{\mathrm{emp}}
(\mathcal{A}_{\mathbf{s}})\right|
\\= &\left|\sum_{i=1}^K \mathbb{E} \big(l(\mathcal{A}_\mathbf{s},
z)|z\in C_i\big)\mu(C_i) -\frac{1}{n}\sum_{i=1}^n  l(\mathcal{A}_{\mathbf{s}}, s_i) \right|\\
\stackrel{(a)}{\leq}  &\left|\sum_{i=1}^K \mathbb{E}
\big(l(\mathcal{A}_\mathbf{s}, z)|z\in
C_i\big)\frac{|N_i|}{n}-\frac{1}{n}\sum_{i=1}^n
l(\mathcal{A}_{\mathbf{s}}, s_i) \right|\\&\qquad+\left|\sum_{i=1}^K
\mathbb{E} \big(l(\mathcal{A}_\mathbf{s}, z)|z\in C_i\big)\mu(C_i)
-\sum_{i=1}^K \mathbb{E} \big(l(\mathcal{A}_\mathbf{s}, z)|z\in
C_i\big)\frac{|N_i|}{n}\right|\\
\stackrel{(b)}{\leq} & \left|\frac{1}{n} \sum_{i=1}^K\sum_{j\in
N_i}\max_{z_2\in C_i} |l(\mathcal{A}_\mathbf{s},
s_j)-l(\mathcal{A}_\mathbf{s}, z_2)|\right| +\left|\max_{z\in
\mathcal{Z}} |l(\mathcal{A}_{\mathbf{s},
z})|\sum_{i=1}^K\Big|\frac{|N_i|}{n}-\mu(C_i)\Big| \right|\\
\stackrel{(c)}{\leq} &  \epsilon(s)+   M
\sum_{i=1}^{K}\left|\frac{|N_i|}{n} -\mu(C_i)\right|,
\end{split}\end{equation}
 where (a), (b), and (c) are due to the triangle
inequality, the definition of $N_i$, and the definition of
$\epsilon(\mathbf{s})$ and $M$, respectively. Note that the
right-hand-side of~(\ref{equ.proofofmain}) is upper-bounded by
$\epsilon(s)+M \sqrt{\frac{2K\ln 2 + 2\ln(1/\delta)}{n}}$ with
probability at least $1-\delta$ due to~(\ref{equ.proofmain}). The
theorem follows.
\end{proof}

Theorem~\ref{thm.main} requires that we fix a $K$ {\em a priori}.
However,   it is often worthwhile to consider adaptive $K$. For
example, in the large-margin classification case, typically the
margin is known only after $\mathbf{s}$ is realized. That is, the
value of $K$ depends on $\mathbf{s}$. Because of this dependency, we
needs a generalization bound that holds uniformly for all $K$.

\begin{corollary}\label{cor.uniform}If $\mathbf{s}$ consists of $n$ i.i.d. samples,
and  $\mathcal{A}$ is $(K, \epsilon_K(\mathbf{s}))$  robust for all
$K\geq 1$, then for any $\delta>0$, with probability at least
$1-\delta$,
\[\left|\hat{l}(\mathcal{A}_\mathbf{s})-l_{\mathrm{emp}}(\mathcal{A}_{\mathbf{s}})\right|\leq
\inf_{K \geq 1} \left[\epsilon_K(s)+M  \sqrt{\frac{2K\ln 2 +
2\ln\frac{K(K+1)}{\delta}}{n}} \right].
\]
\end{corollary}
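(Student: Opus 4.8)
The plan is to deduce Corollary~\ref{cor.uniform} from Theorem~\ref{thm.main} by a union-bound argument over all values of $K$. The obstacle to applying Theorem~\ref{thm.main} directly is that it fixes a single $K$ in advance, whereas here we want a bound that we are free to minimize over $K$ after seeing $\mathbf{s}$. The standard device is to allocate the failure probability $\delta$ across the countably many values $K \geq 1$ so that, on the intersection of the good events, the Theorem~\ref{thm.main} bound holds simultaneously for every $K$.

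First I would fix a sequence of positive weights $\delta_K$ with $\sum_{K \geq 1} \delta_K \leq \delta$, and I would choose $\delta_K = \delta / (K(K+1))$, since the telescoping identity $\sum_{K=1}^{\infty} \frac{1}{K(K+1)} = 1$ makes the total failure probability exactly $\delta$; this is precisely the choice that reproduces the $\ln\frac{K(K+1)}{\delta}$ term appearing in the statement. For each fixed $K$, apply Theorem~\ref{thm.main} with confidence parameter $\delta_K$ in place of $\delta$: since $\mathcal{A}$ is $(K, \epsilon_K(\mathbf{s}))$-robust, with probability at least $1-\delta_K$ we have
\[
\left|\hat{l}(\mathcal{A}_\mathbf{s})-l_{\mathrm{emp}}(\mathcal{A}_{\mathbf{s}})\right| \leq \epsilon_K(s) + M\sqrt{\frac{2K\ln 2 + 2\ln\frac{K(K+1)}{\delta}}{n}}.
\]
Let $B_K$ denote the event that this inequality fails.

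Next I would take a union bound: the probability that $B_K$ occurs for at least one $K \geq 1$ is at most $\sum_{K \geq 1} \delta_K = \delta$. Hence with probability at least $1-\delta$, the displayed inequality holds for \emph{every} $K \geq 1$ simultaneously. On this event, the left-hand side $\left|\hat{l}(\mathcal{A}_\mathbf{s})-l_{\mathrm{emp}}(\mathcal{A}_{\mathbf{s}})\right|$ does not depend on $K$, so it is bounded by the right-hand side for each $K$, and therefore by the infimum of the right-hand sides over all $K \geq 1$. This yields exactly the claimed bound, completing the proof.

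The only real subtlety—and the step I would double-check—is the interchange of quantifiers: the infimum over $K$ is legitimate precisely because we established the bound on a single high-probability event that is good for all $K$ at once, rather than separately for each $K$. The main obstacle is thus conceptual rather than computational: ensuring the failure probabilities are summed correctly (an infinite union bound requires absolute convergence of $\sum_K \delta_K$, which the $1/(K(K+1))$ choice guarantees) so that the total confidence loss stays within $\delta$.
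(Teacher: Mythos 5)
Your proof is correct and follows essentially the same route as the paper: apply Theorem~\ref{thm.main} with confidence parameter $\delta/(K(K+1))$ for each $K$, take a union bound using the telescoping sum $\sum_{K\geq 1}\bigl[\tfrac{\delta}{K}-\tfrac{\delta}{K+1}\bigr]=\delta$, and pass to the infimum on the common good event. Nothing is missing.
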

\begin{proof}Let \[E(K)\triangleq\left\{
\left|\hat{l}(\mathcal{A}_\mathbf{s})-l_{\mathrm{emp}}(\mathcal{A}_{\mathbf{s}})\right|>
 \epsilon_K(s)+M \sqrt{\frac{2K\ln 2 +
2\ln\frac{K(K+1)}{\delta}}{n}} \right\}.\] From
Theorem~\ref{thm.main} we have $\mathrm{Pr}(E(K)) \leq
\delta/(K(K+1)) =\delta/K -\delta/(K+1)$. By the union bound we have
\[\mathrm{Pr}\left\{\bigcup_{K\geq 1} E(K) \right\}\leq \sum_{K\geq 1} \mathrm{Pr}\left( E(K)\right)\leq \sum_{K\geq 1} \left[\frac{\delta}{K}-\frac{\delta}{K+1}\right] =\delta, \]
and the corollary follows.
\end{proof}
  If $\epsilon(s)$ does not depend on $\mathbf{s}$,
we can sharpen the bound given in Corollary~\ref{cor.uniform}.
\begin{corollary}\label{cor.sharpuniform}If $\mathbf{s}$ consists of $n$ i.i.d. samples, and
 $\mathcal{A}$ is $(K, \epsilon_K)$ robust for all $K\geq 1$,
then for any $\delta>0$, with probability at least $1-\delta$,
\[\left|\hat{l}(\mathcal{A}_\mathbf{s})-l_{\mathrm{emp}}(\mathcal{A}_{\mathbf{s}})\right|\leq \inf_{K \geq 1}
\left[ \epsilon_K+M \sqrt{\frac{2K\ln 2 + 2\ln\frac{1}{\delta}}{n}}
\right]. \]
\end{corollary}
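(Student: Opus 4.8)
The plan is to exploit a structural feature absent from Corollary~\ref{cor.uniform}: when $\epsilon_K$ does not depend on $\mathbf{s}$, the quantity we are aiming to bound by is a purely \emph{deterministic} function of $K$. Write $g(K)\triangleq\epsilon_K+M\sqrt{\frac{2K\ln 2+2\ln(1/\delta)}{n}}$. Since $\epsilon_K$, $M$, $n$ and $\delta$ are all fixed and do not involve the realized training set, both $g(K)$ and $\inf_{K\geq 1}g(K)$ depend only on the problem data. This is exactly what will let me avoid the union bound over $K$ (and hence the $\ln\frac{K(K+1)}{\delta}$ penalty) that was needed in Corollary~\ref{cor.uniform}.

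First I would check that the infimum is attained at a finite index. By Definition~\ref{def.robustalgorithm} the robustness constant bounds an absolute value, so $\epsilon_K\geq 0$; meanwhile the second term of $g(K)$ grows like $\sqrt{K}$, so $g(K)\to\infty$ as $K\to\infty$ (the case $M=0$ being trivial). Hence $g(K)>g(1)$ for all sufficiently large $K$, the infimum over the positive integers reduces to a minimum over a finite set, and it is achieved at some $K^\star\geq 1$. The decisive point is that $K^\star$ is determined by $\epsilon_{(\cdot)},M,n,\delta$ alone, and is therefore selected \emph{before} the sample is observed.

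The remaining step is a single application of Theorem~\ref{thm.main} at the fixed value $K=K^\star$. It gives, with probability at least $1-\delta$,
\[\left|\hat{l}(\mathcal{A}_\mathbf{s})-l_{\mathrm{emp}}(\mathcal{A}_{\mathbf{s}})\right|\leq \epsilon_{K^\star}+M\sqrt{\frac{2K^\star\ln 2+2\ln(1/\delta)}{n}}=\inf_{K\geq 1}g(K),\]
which is precisely the claimed inequality.

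I expect the only genuine subtlety to be the conceptual justification for committing to $K^\star$ in advance. In Corollary~\ref{cor.uniform} the optimal partition size could depend on $\mathbf{s}$, forcing a guarantee that holds simultaneously for all $K$ and thus a union bound; here the optimizer is data-independent, so the single-$K$ statement of Theorem~\ref{thm.main} already suffices and the logarithmic factor sharpens from $\ln\frac{K(K+1)}{\delta}$ to $\ln\frac{1}{\delta}$. Everything else is immediate.
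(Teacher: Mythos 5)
Your proposal is correct and is essentially the paper's own argument: since $\epsilon_K$ does not depend on $\mathbf{s}$, the optimal $K^*$ is data-independent, and a single application of Theorem~\ref{thm.main} at $K^*$ gives the bound without a union bound. Your additional remarks on the attainment of the infimum and the $M=0$ case are fine but not needed beyond what the paper states.
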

\begin{proof} The right hand side does not depend on $\mathbf{s}$,
and hence the optimal $K^*$. Therefore, plugging $K^*$ into
Theorem~\ref{thm.main} establishes the corollary.
\end{proof}

\subsection{Quantile Loss}\label{sss.quantile}
So far we considered the standard expected loss setup. In this
section we consider some less extensively investigated loss
functions, namely quantile value and truncated expectation (see the
following for precise definitions). These loss functions are of
interest because they are less sensitive to  the presence of
outliers than the standard average loss \citep{Huber81}.

\begin{definition}\label{def.robust}For a non-negative random variable $X$, the {\em $\beta$-quantile value} is
\[\mathbb{Q}^\beta(X)\triangleq  \inf\left\{c\in
\mathbb{R}: \mathrm{Pr} \big(X\leq c\big) \geq \beta\right\}.\] The
{\em $\beta$-truncated mean} is
\[\mathbb{T}^\beta(X)\triangleq \left\{ \begin{array}{ll}
\mathbb{E}\left[X\cdot\mathbf{1}(X < \mathbb{Q}^\beta(X))\right] &
\mbox{if}\,\, \mathrm{Pr}\big[X=\mathbb{Q}^\beta(X)\big]=0;\\
 \mathbb{E}\left[X\cdot\mathbf{1}(X < \mathbb{Q}^\beta(X))\right] +\frac{\beta-\mathrm{Pr}\big[X<\mathbb{Q}^\beta(X)\big]}{\mathrm{Pr}\big[X=\mathbb{Q}^\beta(X)\big]}\mathbb{Q}^\beta(X) &\mbox{otherwise}.\end{array}\right.\]
\end{definition}
In words, the $\beta-$quantile loss is the smallest value that is
larger or equal to $X$ with probability at least $\beta$. The
$\beta$-truncated mean is the contribution to the expectation of the
leftmost $\beta$ fraction of the distribution. For example, suppose
$X$ is supported on $\{c_1,\cdots, c_{10}\}$
($c_1<c_2<\cdots<c_{10}$) and the probability of taking each value
equals $0.1$. Then the $0.63$-quantile loss of $X$ is $c_7$, and the
$0.63$-truncated mean of $X$ equals $0.1(\sum_{i=1}^6 c_i +0.3
c_7)$.

Given $h\in \mathcal{H}$, $\beta\in (0,\,1)$, and a probability
measure $\mu$ on $\mathcal{Z}$,  let
\[\mathcal{Q}(h, \beta, \mu)\triangleq \mathbb{Q}^{\beta}(l(h, z));\quad \mbox{where:}\,\, z\sim \mu;\]
 and
 \[\mathcal{T}(h, \beta, \mu)\triangleq \mathbb{T}^{\beta}(l(h, z));\quad \mbox{where:}\,\, z\sim \mu;\]
i.e., the $\beta$-quantile value and $\beta$-truncated mean of the
(random) testing error of hypothesis $h$ if the testing sample
follows distribution $\mu$.
 We have the following theorem that is a special case of
Theorem~\ref{thm.pseudoquantile}, hence we omit the proof.
\begin{theorem}[Quantile Value \& Truncated
Mean]\label{thm.quantile} Suppose $\mathbf{s}$ are $n$ i.i.d.
samples drawn according to $\mu$, and denote the empirical
distribution  of $\mathbf{s}$ by $\mu_{\mathrm{emp}}$. Let
$\lambda_0 =\sqrt{\frac{2K\ln 2 + 2\ln(1/\delta)}{n}}$. If $0\leq
\beta-\lambda_0\leq \beta+\lambda_0\leq 1$ and $\mathcal{A}$ is $(K,
\epsilon(\mathbf{s}))$ robust, then with probability at least
$1-\delta$, the followings hold
\begin{equation*}\begin{split}(I)\quad &\mathcal{Q}\left(\mathcal{A}_{\mathbf{s}},
\beta-\lambda_0,\mu_{\mathrm{emp}}\right) -\epsilon(\mathbf{s}) \leq
\mathcal{Q}\left(\mathcal{A}_{\mathbf{s}}, \beta,\mu\right)
 \leq
\mathcal{Q}\left(\mathcal{A}_{\mathbf{s}}, \beta+\lambda_0
,\mu_{\mathrm{emp}}\right)
+\epsilon(\mathbf{s});\\
(II)\quad &\mathcal{T}\left(\mathcal{A}_{\mathbf{s}},
\beta-\lambda_0,\mu_{\mathrm{emp}}\right) -\epsilon(\mathbf{s}) \leq
\mathcal{T}\left(\mathcal{A}_{\mathbf{s}}, \beta,\mu\right)
 \leq
\mathcal{T}\left(\mathcal{A}_{\mathbf{s}}, \beta+\lambda_0
,\mu_{\mathrm{emp}}\right) +\epsilon(\mathbf{s}).
\end{split}\end{equation*}
\end{theorem}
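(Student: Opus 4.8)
The plan is to run the same multinomial-concentration argument as in the proof of Theorem~\ref{thm.main} and to convert it, through the robustness partition, into two-sided control of the cumulative distribution function of the loss. Abbreviate $\epsilon=\epsilon(\mathbf{s})$ and write $F(c)=\mu\bigl(\{z:l(\mathcal{A}_\mathbf{s},z)\le c\}\bigr)$ and $F_{\mathrm{emp}}(c)=\frac1n\sum_{i=1}^n\mathbf{1}\bigl(l(\mathcal{A}_\mathbf{s},s_i)\le c\bigr)$ for the true and empirical distribution functions of the loss. As in Theorem~\ref{thm.main}, the Breteganolle-Huber-Carol inequality guarantees that, with probability at least $1-\delta$, the event $\sum_{i=1}^K\bigl||N_i|/n-\mu(C_i)\bigr|\le\lambda_0$ holds; I would carry out the entire argument on this single event, so that both (I) and (II) hold simultaneously.

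First I would prove the two pointwise bounds $F(c-\epsilon)\le F_{\mathrm{emp}}(c)+\lambda_0$ and $F(c+\epsilon)\ge F_{\mathrm{emp}}(c)-\lambda_0$, valid for every $c$ on this event. The reasoning is cell by cell. If a cell $C_i$ contains at least one training point, robustness forces the loss of every $z\in C_i$ to lie within $\epsilon$ of that point's loss; consequently any cell that contributes mass to $F(c-\epsilon)$ must have all of its training losses at most $c$, so its empirical weight $|N_i|/n$ is already counted by $F_{\mathrm{emp}}(c)$. Cells containing no training point are not controlled by robustness, but for them $|N_i|/n=0$, so their total $\mu$-mass is exactly $\sum_{i:N_i=\emptyset}\bigl||N_i|/n-\mu(C_i)\bigr|$ and is absorbed into $\lambda_0$. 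Writing $\mu(C_i)\le|N_i|/n+\bigl||N_i|/n-\mu(C_i)\bigr|$, summing over the relevant cells, and invoking the concentration event yields the upper bound; the symmetric observation---a cell with some training loss at most $c$ has all of $C_i$ at loss at most $c+\epsilon$---gives the lower bound.

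Second, I would read off part (I) from these CDF bounds. Since $F$ is right-continuous, $F\bigl(\mathcal{Q}(\mathcal{A}_\mathbf{s},\beta,\mu)\bigr)\ge\beta$. Substituting $c=\mathcal{Q}(\mathcal{A}_\mathbf{s},\beta+\lambda_0,\mu_{\mathrm{emp}})$ into the lower CDF bound gives $F(c+\epsilon)\ge(\beta+\lambda_0)-\lambda_0=\beta$, whence $\mathcal{Q}(\mathcal{A}_\mathbf{s},\beta,\mu)\le c+\epsilon$; substituting $c=\mathcal{Q}(\mathcal{A}_\mathbf{s},\beta,\mu)+\epsilon$ into the upper CDF bound gives the matching lower inequality, the hypotheses $0\le\beta-\lambda_0$ and $\beta+\lambda_0\le1$ keeping all levels in $[0,1]$. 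Because the CDF bounds hold for all $c$ on the same event, these quantile inequalities hold simultaneously at every level $t$. For part (II) I would then use the identity $\mathbb{T}^\beta(X)=\int_0^\beta\mathbb{Q}^t(X)\,dt$ (one verifies it reproduces the worked example), so that $\mathcal{T}(\mathcal{A}_\mathbf{s},\beta,\mu)=\int_0^\beta\mathcal{Q}(\mathcal{A}_\mathbf{s},t,\mu)\,dt$; integrating the level-wise quantile bounds and changing variables produces (II). The surplus error term $\epsilon\beta$ (resp.\ $\epsilon(\beta-\lambda_0)$) is at most $\epsilon$ because $\beta\le1$, and non-negativity of the loss, hence of the quantile function, is what lets me drop the integral over the sliver $t\in[0,\lambda_0]$ when matching integration limits.

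I expect the main obstacle to be the first step: correctly accounting, cell by cell, for the interplay of the $\epsilon$-shift coming from robustness and the $\lambda_0$-shift coming from the multinomial deviation, and in particular charging the mass of training-point-free cells to the concentration term without double counting. Once the two CDF inequalities are established, the passage to quantiles and the integration for the truncated mean are routine manipulations of the definitions.
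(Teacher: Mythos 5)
Your argument is correct, and it reaches the conclusion by a route that differs in an interesting way from the paper's. The paper proves this theorem as the special case $\hat{n}=n$ of Theorem~\ref{thm.pseudoquantile}: for part (I) it introduces the cell-minimizers $v_j=\arg\min_{z\in C_j}l(\mathcal{A}_{\mathbf{s}},z)$ and an auxiliary discrete measure $\hat{\mu}$ supported on them, then argues via stochastic dominance and a count of training points in the cells below the $\hat{\mu}$-quantile; your two-sided CDF bounds $F(c-\epsilon)\le F_{\mathrm{emp}}(c)+\lambda_0$ and $F(c+\epsilon)\ge F_{\mathrm{emp}}(c)-\lambda_0$ are a clean repackaging of the same counting (the uncontrolled, training-point-free cells are charged to $\lambda_0$ in both treatments), with the added benefit that the bounds hold uniformly in $c$ on the single concentration event. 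The real divergence is in part (II): the paper runs a four-step chain through three auxiliary measures ($\hat{\mu}$, $\tilde{\mu}$, and the empirical law of a modified sample $\tilde{\mathbf{s}}$), comparing truncated means pairwise, whereas you reduce (II) to (I) via the identity $\mathbb{T}^\beta(X)=\int_0^\beta\mathbb{Q}^t(X)\,dt$ and integrate the level-wise quantile bounds. Your route is shorter and makes the $\epsilon\beta\le\epsilon$ slack transparent; the paper's buys direct extensibility to the pseudo-robust case, where the $(n-\hat{n})/n$ correction enters naturally through $\tilde{\mathbf{s}}$ (your CDF bounds would need an extra additive $(n-\hat{n})/n$ term there, since cells are only controlled through the well-behaved training points). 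One caution: the displayed formula for $\mathbb{T}^\beta$ in Definition~\ref{def.robust} disagrees with the paper's own worked example by a factor of $\mathrm{Pr}[X=\mathbb{Q}^\beta(X)]$ in the atom term; your integral identity matches the worked example and the empirical-distribution characterization used in the appendix (the minimization over weights $\alpha_i\le 1/n$ summing to $\beta$), which is clearly the intended meaning, so your instinct to verify against the example is exactly right.
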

In words, Theorem~\ref{thm.quantile} essentially means that with
high probability, the $\beta$-quantile value/truncated mean of the
testing error (recall that the testing error is a random variable)
is (approximately) bounded by the $(\beta\pm\lambda_0)$-quantile
value/truncated mean of the empirical error, thus providing a way to
estimate the quantile value/truncated expectation of the testing
error based on empirical observations.

\subsection{Markovian samples}\label{sec.Markovian} The robustness approach is not
restricted to the IID setup. In many applications of interest, such
as reinforcement learning and time series forecasting, the IID
assumption is violated. In such applications there is a time driven
process that generates samples that depend on the previous samples
(e.g., the observations of a trajectory of a robot). Such a
situation can be modeled by stochastic process such as a Markov
processes. In this section we establish similar result to the IID
case for samples that are drawn from a Markov chain.
 The state space can be general, i.e.,
it is not necessarily finite or countable. Thus, a certain ergodic
structure of the underlying Markov chain is needed. We focus on
chains that converge to equilibrium exponentially fast and uniformly
in the initial condition. It is known
 that this is equivalent to the class of of
Doeblin chains \citep{MeynTweedie93}. Recall the following
definition~\citep[cf][]{MeynTweedie93,Doob53}).
\begin{definition}\label{def.doeblin}
A Markov chain $\{z_i\}_{i=1}^{\infty}$ on a state space
$\mathcal{Z}$  is a {\em Doeblin chain} (with $\alpha$ and $T$) if
there exists a probability measure $\varphi$ on $\mathcal{Z}$,
$\alpha>0$, an integer $T\geq 1$ such that
\[\mathrm{Pr}(z_T\in H| z_0=z)\geq \alpha \varphi(H); \,\, \forall\, \mbox{measureable}\,\, H\subseteq \mathcal{Z};\,\,\forall z\in \mathcal{Z}.\]
\end{definition}
The class of Doeblin chains is probably the ``nicest'' class of
general state-space Markov chains. We notice that such assumption is
not overly restrictive, since by requiring that an ergodic theorem
holds for all bounded functions uniformly in the initial
distribution itself implies that a chain is Doeblin
\citep{MeynTweedie93}. In particular, an ergodic chain defined on  a
finite state-space is a Doeblin chain.

Indeed, the Doeblin chain condition guarantees that an invariant
measure $\pi$ exists. Furthermore, we have the
 following lemma adapted from Theorem 2
of~\citet{GlynnOrmoneit02}.
\begin{lemma}\label{lem.doeblin} Let  $\{z_i\}$ be a  Doeblin  chain
as in Definition~\ref{def.doeblin}. Fix a function $f:
\mathcal{Z}\rightarrow \mathbb{R}$ such that $\|f\|_{\infty} \leq
C$. Then  for $n> 2C T/\epsilon \alpha$ the following holds
\begin{equation*}\begin{split}&\mathrm{Pr}\left(\frac{1}{n}\sum_{i=1}^nf(z_i) -\int_{\mathcal{Z}} f(z) \pi(dz) s\geq  \epsilon\right)  \leq
\exp\left(-\frac{\alpha^2(n\epsilon-2CT/\alpha)^2}{2n C^2
T^2}\right).\end{split}\end{equation*}
\end{lemma}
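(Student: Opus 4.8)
The plan is to obtain Lemma~\ref{lem.doeblin} directly from Theorem~2 of \citet{GlynnOrmoneit02} by a normalization argument, so that the only real work is book-keeping of constants. First I would recall the relevant Glynn--Ormoneit bound in its normalized form: for a chain satisfying the minorization of Definition~\ref{def.doeblin} with constants $\alpha$ and $T$, an invariant measure $\pi$ (whose existence is guaranteed by the Doeblin condition), and any measurable $g$ with $\|g\|_\infty \le 1$, one has for all $n$ large enough that
\[
\mathrm{Pr}\left(\frac{1}{n}\sum_{i=1}^n g(z_i) - \int_{\mathcal{Z}} g(z)\,\pi(dz) \ge \eta\right) \le \exp\left(-\frac{\alpha^2(n\eta - 2T/\alpha)^2}{2nT^2}\right),
\]
valid whenever $n\eta - 2T/\alpha > 0$, i.e.\ $n > 2T/(\eta\alpha)$. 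This is a one-sided Hoeffding-type inequality for the time-average of a uniformly ergodic chain; the numerator shift $2T/\alpha$ is the price paid for the dependence between samples, and it is exactly what forces the lower bound on $n$.

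Then I would reduce the general bounded case to this normalized one. Given $f$ with $\|f\|_\infty \le C$, set $g \triangleq f/C$, so that $\|g\|_\infty \le 1$ and $\int g\,d\pi = \frac{1}{C}\int f\,d\pi$. Applying the displayed bound to $g$ with deviation level $\eta = \epsilon/C$ yields
\[
\mathrm{Pr}\left(\frac{1}{n}\sum_{i=1}^n f(z_i) - \int_{\mathcal{Z}} f(z)\,\pi(dz) \ge \epsilon\right) \le \exp\left(-\frac{\alpha^2(n\epsilon/C - 2T/\alpha)^2}{2nT^2}\right).
\]
Pulling the factor $1/C$ out of the squared term via $n\epsilon/C - 2T/\alpha = (n\epsilon - 2CT/\alpha)/C$ turns the exponent into $-\alpha^2(n\epsilon-2CT/\alpha)^2/(2nC^2T^2)$, which is precisely the claimed bound; moreover the positivity condition $n\eta - 2T/\alpha > 0$ becomes $n > 2CT/(\epsilon\alpha)$, matching the hypothesis exactly. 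This completes the reduction.

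The only genuine subtlety — the \emph{hard part} — is verifying that the cited theorem applies verbatim in our notation: that Definition~\ref{def.doeblin}'s minorization $\mathrm{Pr}(z_T \in H \mid z_0 = z) \ge \alpha\varphi(H)$ is exactly the hypothesis under which \citet{GlynnOrmoneit02} prove their bound, that their constants correspond to our $(\alpha, T)$, and that the inequality is stated for the one-sided upper tail centered at the invariant $\pi$ (not the initial distribution). If instead one wanted a self-contained argument, the real work would be to reprove the Hoeffding inequality from the regeneration/splitting structure of a Doeblin chain: the minorization lets one construct regeneration times at which the chain restarts from $\varphi$, the inter-regeneration blocks are i.i.d.\ with geometrically (hence exponentially) bounded lengths, and a block version of Hoeffding's inequality applied to these excursions yields the bound, with the $2CT/\alpha$ shift arising from the residual block not yet completed by time $n$. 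I would only fall back on this route if the cited statement did not transfer cleanly; given that the lemma is explicitly ``adapted from'' their Theorem~2, the normalization reduction above should suffice.
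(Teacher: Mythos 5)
Your proposal is correct and matches the paper exactly: the paper gives no proof of Lemma~\ref{lem.doeblin} at all, simply stating it as ``adapted from Theorem 2 of \citet{GlynnOrmoneit02},'' and your normalization of $f$ to $g=f/C$ with $\eta=\epsilon/C$ is precisely the adaptation being invoked, with the constants $2CT/\alpha$ and $n>2CT/\epsilon\alpha$ coming out correctly. Nothing further is needed.
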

The following is the main theorem of this section that establishes a
generalization bound for robust algorithms with samples drawn
according to a Doeblin chain.
\begin{theorem}
Let $\mathbf{s}=\{s_1,\cdots, s_n\}$ be the first $n$ outputs
 of a Doeblin chain with $\alpha$ and $T$ such that
$n>2T/\alpha$,
 and suppose that  $\mathcal{A}$
is $(K, \epsilon(\mathbf{s}))$-robust. Then for any $\delta>0$, with
probability at least $1-\delta$,
\[\left|\hat{l}(\mathcal{A}_\mathbf{s})-l_{\mathrm{emp}}(\mathcal{A}_{\mathbf{s}})\right|\leq
\epsilon(s)+M\left(\frac{8T^2(K \ln2 +\ln (1/\delta))}{\alpha^2 n}
\right)^{1/4}.
\]
\end{theorem}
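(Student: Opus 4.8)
The plan is to adapt the proof of Theorem~\ref{thm.main} verbatim up to the point where concentration of the empirical measure is invoked, replacing the Breteganolle--Huber--Carol inequality (which relied on the IID multinomial structure) with the Doeblin-chain deviation bound of Lemma~\ref{lem.doeblin}. The robustness argument itself is purely deterministic given the partition $\{C_i\}_{i=1}^K$, so the decomposition in~(\ref{equ.proofofmain}) carries over unchanged: writing $N_i$ for the set of indices of samples falling in $C_i$, the same three-step chain of triangle inequalities yields
\[
\left|\hat{l}(\mathcal{A}_\mathbf{s})-l_{\mathrm{emp}}(\mathcal{A}_{\mathbf{s}})\right|\leq \epsilon(\mathbf{s})+M\sum_{i=1}^{K}\left|\frac{|N_i|}{n}-\pi(C_i)\right|,
\]
where now $\pi$ is the invariant measure of the chain and $\hat{l}$ is understood as the expected loss under $\pi$. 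The entire task reduces to controlling $\sum_{i=1}^K\bigl||N_i|/n-\pi(C_i)\bigr|$ with high probability.

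To bound that sum I would apply Lemma~\ref{lem.doeblin} to the indicator functions $f_i(z)=\mathbf{1}(z\in C_i)$, each of which satisfies $\|f_i\|_\infty\leq 1$, so $C=1$. Since $\frac{1}{n}\sum_{j=1}^n f_i(s_j)=|N_i|/n$ and $\int f_i\,d\pi=\pi(C_i)$, the lemma gives a one-sided deviation bound for each coordinate. The natural route is to fix a target $\lambda$, ask that $|{|N_i|}/n-\pi(C_i)|\leq\lambda/K$ for every $i$ (which forces the sum below $\lambda$), and union-bound over the $K$ sets and over the two signs. Each tail is at most $\exp\bigl(-\alpha^2(n(\lambda/K)-2T/\alpha)^2/(2nT^2)\bigr)$, giving a total failure probability of at most $2K\exp(\cdots)$.

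Setting this aggregate probability equal to $\delta$ and solving for $\lambda$ is the algebraic heart of the argument. I expect the target form $\lambda=K\bigl(\tfrac{8T^2(K\ln 2+\ln(1/\delta))}{\alpha^2 n}\bigr)^{1/4}/\,(\text{something})$ to emerge, but the exact matching to the stated $M\bigl(\tfrac{8T^2(K\ln 2+\ln(1/\delta))}{\alpha^2 n}\bigr)^{1/4}$ requires care. The one-quarter power (rather than the one-half power of the IID case) is the tell-tale sign: the Markovian bound is quadratic in $\epsilon$ inside the exponent only after the shift $n\epsilon-2T/\alpha$ is absorbed, and when one inverts a Gaussian-type tail whose argument is itself the deviation of an \emph{average} (so $\epsilon$ scales like $\lambda/K$), the two factors of the variance compound to produce a fourth root. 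The hypothesis $n>2T/\alpha$ is exactly what guarantees the shift term $2T/\alpha$ is dominated so the per-coordinate bound is non-vacuous.

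The main obstacle, therefore, is not conceptual but the bookkeeping in the final inversion: one must choose the per-coordinate slack, union-bound correctly over $2K$ events, absorb the additive $2T/\alpha$ correction from Lemma~\ref{lem.doeblin}, and verify that the resulting expression collapses cleanly into the claimed fourth-root form. A secondary subtlety is that Lemma~\ref{lem.doeblin} as stated is one-sided, so obtaining a bound on the absolute deviation $\sum_i\bigl||N_i|/n-\pi(C_i)\bigr|$ requires applying it to both $f_i$ and $-f_i$ (or to $f_i$ and $1-f_i$), which is where the extra factor of $2$ in front of $K$ originates. Once the deviation sum is controlled at level $\lambda$ with probability $1-\delta$, the theorem follows immediately from the displayed deterministic inequality.
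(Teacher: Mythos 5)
Your reduction of the problem to controlling $\sum_{i=1}^K\bigl||N_i|/n-\pi(C_i)\bigr|$ via the deterministic decomposition of~(\ref{equ.proofofmain}) is exactly the paper's first move, and invoking Lemma~\ref{lem.doeblin} with $C=1$ on indicator functions is also the right instrument. The genuine gap is in the union bound. You propose to control each cell separately to within $\lambda/K$ and union over roughly $2K$ events; the paper instead mimics the proof of the Bretagnolle--Huber--Carol inequality itself: it applies Lemma~\ref{lem.doeblin} to the $2^K$ indicators $\mathbf{1}\bigl(z\in\bigcup_{i\in I}C_i\bigr)$, $I\subseteq\{1,\dots,K\}$, each at the \emph{full} deviation level $\lambda$, using the fact that $\sum_i |a_i|$ with $\sum_i a_i=0$ is attained as a maximum of $\sum_{i\in I}a_i$ over subsets $I$. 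That is precisely what produces the $2^K$ multiplier and hence the $K\ln 2$ term inside the stated bound. Your per-cell route forces the per-coordinate deviation down to $\lambda/K$, so after inverting the tail you obtain $\lambda$ of order $K\sqrt{\ln(2K/\delta)/n}$ rather than $\sqrt{(K\ln 2+\ln(1/\delta))/n}$ --- a loss of roughly $\sqrt{K}$ that cannot be absorbed afterwards. The ``algebraic heart'' you defer to the end is exactly where this shows up: your expression will not collapse to $M\bigl(8T^2(K\ln2+\ln(1/\delta))/(\alpha^2 n)\bigr)^{1/4}$, only to a strictly weaker bound with a worse $K$-dependence.

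Two smaller points. First, the factor-of-two-for-signs issue you raise disappears in the paper's formulation, because the complement of each union $\bigcup_{i\in I}C_i$ is itself one of the $2^K$ sets, so one-sided control over the whole family suffices. Second, your heuristic for the $1/4$ power is not where it actually comes from --- it has nothing to do with $\epsilon$ scaling like $\lambda/K$ (and $K$ does not grow with $n$). In the paper the fourth root emerges from the specific choice $\lambda_0=\sqrt{T/(\alpha n)}\,\sqrt{\sqrt{2n(K\ln2+\ln(1/\delta))}+2}$, chosen so that the substitution into the exponential tail yields exactly $\delta$ after multiplying by $2^K$, followed by the simplification $\sqrt{\sqrt{2n(\cdots)}+2}\le\sqrt{2\sqrt{2n(\cdots)}}$, which is valid because $n>2T/\alpha$ forces $n\ge 2$. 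So the skeleton of your plan is sound, but the concentration step must be replaced by the subset union bound, and the final inversion carried out with this particular $\lambda_0$, to reach the stated constants.
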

\begin{proof}
We prove the following slightly stronger statement:
\begin{equation}\label{equ.proofmarkovian}\left|\hat{l}(\mathcal{A}_\mathbf{s})-l_{\mathrm{emp}}(\mathcal{A}_{\mathbf{s}})\right|\leq
\epsilon(s)+M\sqrt{\frac{T}{\alpha n}} \sqrt{\sqrt{2n(K \ln2 +\ln
(1/\delta))} +2}.\end{equation}
 Let
$\lambda_0=\sqrt{\frac{T}{\alpha n}} \sqrt{\sqrt{2n(K \ln2 +\ln
(1/\delta))} +2}$, we have that $\lambda_0> \sqrt{2T/\alpha n}$.
Since $n> 2T/\alpha$, we have $n> \sqrt{2Tn/\alpha}$, which  leads
to
\[n>\frac{2T}{\alpha\sqrt{2T/\alpha n}}>\frac{2T}{\alpha
\lambda_0}.\]

Let $N_i$  be the set of index of points of
 $\mathbf{s}$
 that fall into the $C_i$.
Consider the set of functions
$\mathcal{H}=\{\mathbf{1}(\mathbf{x}\in H)| H=\bigcup_{i\in I}
C_i;\,\, \forall I \subseteq \{1,\cdots, K\}\}$, i.e., the set of
indicator functions of all different unions of $C_i$. Then
$|\mathcal{H}|=2^K$. Furthermore, fix a $h_0\in \mathcal{H}$,
\begin{equation*}
\begin{split}&\mathrm{Pr}(\sum_{j=1}^{K}\left|\frac{|N_j|}{n}-\pi(C_j)\right|
\geq \lambda)\\ =&\mathrm{Pr}\Big\{\sup_{h\in \mathcal{H}}
[\frac{1}{n}\sum_{i=1}^n h(s_i)- \mathbb{E}_{\pi} h(s)] \geq\lambda
\Big\}
\\ \leq & 2^{K}
 \mathrm{Pr}[\frac{1}{n}\sum_{i=1}^n h_0(s_i)- \mathbb{E}_{\pi} h_0(s)\geq \lambda].
 \end{split}
 \end{equation*}
Since $\|h_0\|_{\infty}=1$, we can apply Lemma~\ref{lem.doeblin} to
get for  $n>  2 T/\lambda \alpha$
\[ \mathrm{Pr}[\frac{1}{n}\sum_{i=1}^n h_0(s_i)- \mathbb{E}_{\pi}
h_0(s)\geq \lambda] \leq \exp\left(-\frac{\alpha^2 (n
\lambda^2-2T/\alpha)^2} {2nT^2}\right).\] Substitute in $\lambda_0$,
\[\mathrm{Pr}(\sum_{j=1}^{K}\left|\frac{|N_j|}{n}-\pi(C_j)\right|
\geq \lambda_0) \leq  2^K\exp\left(-\frac{\alpha^2 (n
\lambda_0^2-2T/\alpha)^2} {2nT^2}\right)=\delta.\]  Thus,
(\ref{equ.proofmarkovian}) follows by an identical argument as the
proof of Theorem~\ref{thm.main}.

To complete the proof of the theorem, note that $n>2T/\alpha$
implies $n\geq 2$, hence $\sqrt{2n(K \ln2 +\ln (1/\delta))}\geq 2$.
Therefore, \begin{equation*}\begin{split}\sqrt{\frac{T}{\alpha n}}
\sqrt{\sqrt{2n(K \ln2 +\ln (1/\delta))} +2}  &\leq
\sqrt{\frac{T}{\alpha n}} \sqrt{2\sqrt{2n(K \ln2 +\ln (1/\delta))} }
\\ &= \left(\frac{8T^2(K \ln2 +\ln (1/\delta))}{\alpha^2 n}
\right)^{1/4},\end{split}\end{equation*} and the theorem follows.
\end{proof}

\section{Pseudo Robustness}\label{sec.pseudo}
In this section we propose a relaxed definition of robustness that
accounts for the case where Equation~(\ref{equ.robust}) holds for
most of training samples, as opposed to Definition~\ref{def.robust}
where Equation~(\ref{equ.robust}) holds for all training samples.
Recall that the size of training set is fixed as $n$.
\begin{definition}Algorithm $\mathcal{A}$ is {\em $(K,\, \epsilon(\mathbf{s}),\hat{n})$ pseudo robust} if
$\mathcal{Z}$ can be partitioned into $K$ disjoint sets, denoted as
$\{C_i\}_{i=1}^K$, and a subset of training samples
$\hat{\mathbf{s}}$ with $|\hat{\mathbf{s}}|=\hat{n}$ such that
$\forall s\in \hat{\mathbf{s}}$,
\[s, z\in C_i,\quad \Longrightarrow \quad \left|l(\mathcal{A}_\mathbf{s}, s)-l(\mathcal{A}_\mathbf{s}, z)\right| \leq \epsilon(\mathbf{s}).\]
\end{definition}
Observe that  $(K,\, \epsilon(\mathbf{s}))$-robust is equivalent to
$(K,\, \epsilon(\mathbf{s}),n)$ pseudo robust.

\begin{theorem}\label{thm.mainpseudo}If $\mathbf{s}$ consists of $n$ i.i.d. samples, and  $\mathcal{A}$ is $(K, \epsilon(\mathbf{s}), \hat{n})$ pseudo robust,
then for any $\delta>0$, with probability at least $1-\delta$,
\[\left|\hat{l}(\mathcal{A}_\mathbf{s})-l_{\mathrm{emp}}(\mathcal{A}_{\mathbf{s}})\right|\leq
\frac{\hat{n}}{n}\epsilon(s)+M\left(\frac{n-\hat{n}}{n}+\sqrt{\frac{2K\ln
2 + 2\ln(1/\delta)}{n}}\right).
\]
\end{theorem}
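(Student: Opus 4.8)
The plan is to follow the proof of Theorem~\ref{thm.main} almost verbatim, isolating the single place where pseudo-robustness is weaker than full robustness. First I would set up the concentration piece exactly as before: letting $N_i$ be the indices of training points falling in $C_i$, the vector $(|N_1|,\dots,|N_K|)$ is multinomial with parameters $n$ and $(\mu(C_1),\dots,\mu(C_K))$, so the Breteganolle-Huber-Carol inequality yields, with probability at least $1-\delta$,
\[\sum_{i=1}^K \left|\frac{|N_i|}{n}-\mu(C_i)\right| \leq \sqrt{\frac{2K\ln 2 + 2\ln(1/\delta)}{n}},\]
as in~(\ref{equ.proofmain}). I would then reuse the decomposition~(\ref{equ.proofofmain}) through step (b), bounding $|\hat l(\mathcal{A}_\mathbf{s})-l_{\mathrm{emp}}(\mathcal{A}_\mathbf{s})|$ by a robustness term $\frac{1}{n}\sum_{i=1}^K\sum_{j\in N_i}\max_{z\in C_i}|l(\mathcal{A}_\mathbf{s},s_j)-l(\mathcal{A}_\mathbf{s},z)|$ plus the concentration term $M\sum_{i}|{|N_i|}/{n}-\mu(C_i)|$.

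All the new work lives in the robustness term. The idea is to split each inner sum over $j\in N_i$ according to whether $s_j$ belongs to the good subset $\hat{\mathbf{s}}$. For the indices with $s_j\in\hat{\mathbf{s}}$, the pseudo-robustness hypothesis gives $\max_{z\in C_i}|l(\mathcal{A}_\mathbf{s},s_j)-l(\mathcal{A}_\mathbf{s},z)|\leq\epsilon(\mathbf{s})$; for the rest I would use only that losses lie in $[0,M]$, so any two loss values differ by at most $M$. Since exactly $\hat n$ of the $n$ training indices lie in $\hat{\mathbf{s}}$ and the remaining $n-\hat n$ do not, summing yields
\[\frac{1}{n}\sum_{i=1}^K\sum_{j\in N_i}\max_{z\in C_i}|l(\mathcal{A}_\mathbf{s},s_j)-l(\mathcal{A}_\mathbf{s},z)| \leq \frac{\hat n}{n}\epsilon(\mathbf{s})+\frac{n-\hat n}{n}M.\]
Adding the concentration term gives exactly the claimed bound.

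The step requiring the most care is the counting in the split above: because the blocks $C_i$ and the index sets $N_i$ are fixed before we single out $\hat{\mathbf{s}}$, I must check that, summed across all $K$ blocks, precisely $\hat n$ summands receive the $\epsilon(\mathbf{s})$ bound and precisely $n-\hat n$ receive the $M$ bound, regardless of how the training points scatter among the $C_i$. This holds because $\{N_i\}$ partitions $\{1,\dots,n\}$ and membership in $\hat{\mathbf{s}}$ classifies each index exactly once, so no double counting occurs. I expect no genuine obstacle beyond this bookkeeping; as a sanity check, setting $\hat n=n$ collapses the extra $M(n-\hat n)/n$ term and recovers Theorem~\ref{thm.main}.
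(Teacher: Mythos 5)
Your proposal is correct and matches the paper's own proof essentially step for step: the same Breteganolle--Huber--Carol concentration bound, the same triangle-inequality decomposition, and the same split of each $N_i$ into indices inside and outside $\hat{\mathbf{s}}$, bounding the former by $\epsilon(\mathbf{s})$ and the latter by $M$. The counting argument you flag as the delicate step is exactly the one the paper performs via the sets $\hat{N}_i$, so there is nothing further to add.
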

\begin{proof}
 Let $N_i$ and $\hat{N}_i$ be the set of indices of points of
 $\mathbf{s}$ and $\hat{\mathbf{s}}$
 that fall into the $C_i$, respectively.
  Similarly to the proof of Theorem~\ref{thm.main}, we note that  $(|N_1|, \cdots,
|N_K|)$  is an IID multinomial random variable with parameters $n$
and $(\mu(C_1),\cdots, \mu(C_K))$. And hence due to
Breteganolle-Huber-Carol inequality, the following holds with
probability at least $1-\delta$,
\begin{equation}\label{equ.proofmainpseudo}\sum_{i=1}^{K}\left|\frac{|N_i|}{n} -\mu(C_i)\right| \leq
\sqrt{\frac{2K\ln 2 + 2\ln(1/\delta)}{n}}.\end{equation}
Furthermore, we have
\begin{equation*}\begin{split}&\left|\hat{l}(\mathcal{A}_\mathbf{s})-l_{\mathrm{emp}}
(\mathcal{A}_{\mathbf{s}})\right|
\\= &\left|\sum_{i=1}^K \mathbb{E} \big(l(\mathcal{A}_\mathbf{s},
z)|z\in C_i\big)\mu(C_i) -\frac{1}{n}\sum_{i=1}^n  l(\mathcal{A}_{\mathbf{s}}, s_i) \right|\\
{\leq}  &\left|\sum_{i=1}^K \mathbb{E}
\big(l(\mathcal{A}_\mathbf{s}, z)|z\in
C_i\big)\frac{|N_i|}{n}-\frac{1}{n}\sum_{i=1}^n
l(\mathcal{A}_{\mathbf{s}}, s_i) \right|\\&\qquad+\left|\sum_{i=1}^K
\mathbb{E} \big(l(\mathcal{A}_\mathbf{s}, z)|z\in C_i\big)\mu(C_i)
-\sum_{i=1}^K \mathbb{E} \big(l(\mathcal{A}_\mathbf{s}, z)|z\in
C_i\big)\frac{|N_i|}{n}\right|\\
\leq &\left|\frac{1}{n}\sum_{i=1}^K \big[|N_i|\times\mathbb{E}
\big(l(\mathcal{A}_\mathbf{s}, z)|z\in C_i\big)-\sum_{j\in
\hat{N}_i} l(\mathcal{A}_{\mathbf{s}},s_j)-\sum_{j\in N_i,
j\not\in\hat{N}_i} l(\mathcal{A}_{\mathbf{s}},s_j) \big]
\right|\\&\qquad+\left|\max_{z\in \mathcal{Z}}
|l(\mathcal{A}_{\mathbf{s},
z})|\sum_{i=1}^K\Big|\frac{|N_i|}{n}-\mu(C_i)\Big|
\right|.\end{split}\end{equation*} Note that due to the triangle
inequality as well as the assumption that the loss is non-negative
and upper bounded by $M$, the right-hand side can be upper bounded
by
\begin{equation*}\begin{split}
 &\left|\frac{1}{n} \sum_{i=1}^K\sum_{j\in
\hat{N}_i}\max_{z_2\in C_i} |l(\mathcal{A}_\mathbf{s},
s_j)-l(\mathcal{A}_\mathbf{s}, z_2)|\right| +\left|\frac{1}{n}
\sum_{i=1}^K\sum_{j\in N_i, j\not\in\hat{N}_i}\max_{z_2\in C_i}
|l(\mathcal{A}_\mathbf{s}, s_j)-l(\mathcal{A}_\mathbf{s},
z_2)|\right|\\&\qquad + M \sum_{i=1}^{K}\left|\frac{|N_i|}{n}
-\mu(C_i)\right|\\
\leq & \frac{\hat{n}}{n}\epsilon(s)+ \frac{n-\hat{n}}{n} M + M
\sum_{i=1}^{K}\left|\frac{|N_i|}{n} -\mu(C_i)\right|.
\end{split}\end{equation*}
where the inequality holds due to definition of $N_i$ and
$\hat{N}_i$.
 The theorem follows by applying~(\ref{equ.proofmainpseudo}).
\end{proof}

Similarly, Theorem~\ref{thm.quantile} can be generalized to the
pseudo robust case. The proof is lengthy and hence postponed to
Appendix~\ref{app.proofof.pseudoquantile}.
\begin{theorem}[Quantile Value \& Truncated
Expectation]\label{thm.pseudoquantile} Suppose $\mathbf{s}$ has $n$
 samples drawn i.i.d.\ according to $\mu$, and denote the empirical
distribution  of $\mathbf{s}$ as $\mu_{\mathrm{emp}}$. Let
$\lambda_0 =\sqrt{\frac{2K\ln 2 + 2\ln(1/\delta)}{n}}$. Suppose
$0\leq \beta-\lambda_0-(n-\hat{n})/n\leq
\beta+\lambda_0+(n-\hat{n})/n\leq 1$ and $\mathcal{A}$ is $(K,
\epsilon(\mathbf{s}), \hat{n})$ pseudo robust. Then with probability
at least $1-\delta$, the followings hold
\begin{equation*}\begin{split}(I)\quad &\mathcal{Q}\left(\mathcal{A}_{\mathbf{s}},
\beta-\lambda_0-\frac{n-\hat{n}}{n},\mu_{\mathrm{emp}}\right)
-\epsilon(\mathbf{s})\\ &\qquad\qquad \leq
\mathcal{Q}\left(\mathcal{A}_{\mathbf{s}}, \beta,\mu\right)
 \leq
\mathcal{Q}\left(\mathcal{A}_{\mathbf{s}}, \beta+\lambda_0
+\frac{n-\hat{n}}{n},\mu_{\mathrm{emp}}\right)
+\epsilon(\mathbf{s});\\
(II)\quad &\mathcal{T}\left(\mathcal{A}_{\mathbf{s}},
\beta-\lambda_0-\frac{n-\hat{n}}{n},\mu_{\mathrm{emp}}\right)
-\epsilon(\mathbf{s}) \\ &\qquad\qquad \leq
\mathcal{T}\left(\mathcal{A}_{\mathbf{s}}, \beta,\mu\right)
 \leq
\mathcal{T}\left(\mathcal{A}_{\mathbf{s}}, \beta+\lambda_0
+\frac{n-\hat{n}}{n},\mu_{\mathrm{emp}}\right)
+\epsilon(\mathbf{s}).
\end{split}\end{equation*}
\end{theorem}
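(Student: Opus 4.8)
The plan is to establish part (I), the quantile inequalities, by a direct counting argument over the partition cells, and then to deduce part (II), the truncated-mean inequalities, from (I) via the integral representation $\mathbb{T}^\beta(X)=\int_0^\beta \mathbb{Q}^t(X)\,dt$ (which one checks matches Definition~\ref{def.robust} exactly, including the atom-correction term). As the common first ingredient I would invoke the Breteganolle-Huber-Carol inequality precisely as in the proof of Theorem~\ref{thm.main} to obtain a \emph{single} high-probability event, of probability at least $1-\delta$, on which $\sum_{i=1}^K \big| |N_i|/n - \mu(C_i)\big| \leq \lambda_0$, where $N_i$ indexes the training points landing in $C_i$. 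The point is that this one event simultaneously controls \emph{all} quantile levels, so every inequality below holds uniformly in the level parameter. I would also write $\hat{N}_i$ for the indices of the robust samples $\hat{\mathbf{s}}$ inside $C_i$ and record the identity $\sum_i(|N_i|-|\hat{N}_i|)=n-\hat{n}$.

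For the upper bound in (I), set $c^\ast=\mathcal{Q}(\mathcal{A}_\mathbf{s},\beta+\lambda_0+(n-\hat{n})/n,\mu_{\mathrm{emp}})$ and let $A$ be the set of cells containing at least one robust sample of empirical loss $\leq c^\ast$. Pseudo-robustness then forces every $z$ in such a cell to have loss $\leq c^\ast+\epsilon(\mathbf{s})$, so $\mathrm{Pr}_{z\sim\mu}(l\leq c^\ast+\epsilon(\mathbf{s}))\geq \sum_{i\in A}\mu(C_i)$. The decisive counting step is that the number of robust samples with loss $\leq c^\ast$ is at least (the number of all samples with loss $\leq c^\ast$) minus $(n-\hat{n})$, which is $\geq n(\beta+\lambda_0)$; since all these samples sit in cells of $A$, one gets $\sum_{i\in A}|N_i|/n\geq \beta+\lambda_0$, and the BHC event then yields $\mathrm{Pr}(l\leq c^\ast+\epsilon(\mathbf{s}))\geq\beta$, which is exactly the claimed upper quantile bound. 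The lower bound is symmetric: with $c_\ast=\mathcal{Q}(\mathcal{A}_\mathbf{s},\beta,\mu)$ and $B$ the set of cells with $\mu(\{z\in C_i: l\leq c_\ast\})>0$, pseudo-robustness makes the robust samples in those cells have loss $\leq c_\ast+\epsilon(\mathbf{s})$, and the estimate $\sum_{i\in B}|\hat{N}_i|/n \geq \sum_{i\in B}|N_i|/n - (n-\hat{n})/n \geq \beta-\lambda_0-(n-\hat{n})/n$ gives the matching empirical-quantile inequality.

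Finally I would integrate to reach (II). Writing $\Delta=\lambda_0+(n-\hat{n})/n$, the bounds from (I) hold at every admissible level on the good event, i.e.\ $\mathcal{Q}(\mathcal{A}_\mathbf{s},t,\mu)\leq \mathcal{Q}(\mathcal{A}_\mathbf{s},t+\Delta,\mu_{\mathrm{emp}})+\epsilon(\mathbf{s})$ for all such $t$. Integrating over $t\in[0,\beta]$ with $\mathbb{T}^\beta=\int_0^\beta\mathbb{Q}^t$, substituting $u=t+\Delta$, and then enlarging the integration interval back to $[0,\beta+\Delta]$ (legitimate since the loss, hence the quantile function, is non-negative) yields $\mathcal{T}(\mathcal{A}_\mathbf{s},\beta,\mu)\leq \mathcal{T}(\mathcal{A}_\mathbf{s},\beta+\Delta,\mu_{\mathrm{emp}})+\epsilon(\mathbf{s})$, the surplus factor being absorbed using $\beta\leq 1$; the lower bound follows by the mirror computation. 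I expect the main obstacle to be the bookkeeping in the counting step of (I): keeping the two distinct probability shifts, $\lambda_0$ from sampling error and $(n-\hat{n})/n$ from the non-robust samples, cleanly separate from the value shift $\epsilon(\mathbf{s})$, while handling the infimum in the quantile definition and possible atoms at the quantile so that no strict-versus-non-strict slippage creeps into the chain of inequalities.
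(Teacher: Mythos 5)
Your proposal is correct, but it takes a genuinely different route from the paper's, most notably in part (II). For part (I) the underlying idea is the same --- a counting argument over cells on the single Breteganolle--Huber--Carol event --- but the paper routes it through an auxiliary discrete measure $\hat{\mu}$ supported on the per-cell minimizers $v_j=\arg\min_{z\in C_j}l(\mathcal{A}_{\mathbf{s}},z)$ and a stochastic-dominance step $\mathcal{Q}(\mathcal{A}_{\mathbf{s}},\beta,\hat{\mu})\leq\mathcal{Q}(\mathcal{A}_{\mathbf{s}},\beta,\mu)$, whereas you count directly; your version is shorter and avoids the ordering of the $v_j$. For part (II) the paper does \emph{not} deduce the truncated-mean bounds from the quantile bounds: it runs a separate four-step chain $\mathcal{T}(\cdot,\mu)\geq\mathcal{T}(\cdot,\hat{\mu})\geq\mathcal{T}(\cdot,\tilde{\mu})\geq\mathcal{T}(\cdot,\tilde{\mu}_{\mathrm{emp}})-\epsilon(\mathbf{s})\geq\mathcal{T}(\cdot,\mu_{\mathrm{emp}})-\epsilon(\mathbf{s})$ through two auxiliary measures and a modified sample set $\tilde{\mathbf{s}}$, using stochastic dominance and the variational characterization of the empirical truncated mean. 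Your route instead integrates the level-uniform quantile inequality via $\mathbb{T}^\beta(X)=\int_0^\beta\mathbb{Q}^t(X)\,dt$; this is cleaner, and the two points you must (and do) attend to are exactly the right ones: the single BHC event makes the quantile bound hold simultaneously at all admissible levels $t$, and non-negativity of the loss lets you enlarge the integration interval after the change of variables, with the surplus $\beta\epsilon(\mathbf{s})\leq\epsilon(\mathbf{s})$ absorbed since $\beta\leq 1$. One caveat worth recording: the displayed atom-correction term in Definition~\ref{def.robust} does not literally agree with the paper's own worked example (it is off by a factor of $\mathrm{Pr}[X=\mathbb{Q}^\beta(X)]$); the integral identity you invoke matches the example and the variational characterization the paper itself uses, so it is the intended definition, but you should state explicitly that this is the version of $\mathbb{T}^\beta$ you are working with.
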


\section{Examples of Robust Algorithms}\label{sec.example}
In this section we provide some examples of robust algorithms. The
proofs of the examples can be found in Appendix. Our first example
is Majority Voting (MV) classification \citep[cf Section 6.3
of][]{DevroyeGyorfiLugosi96} that partitions the input space
$\mathcal{X}$ and labels each partition set according to a majority
vote of the training samples belonging to it.
\begin{example}[Majority Voting]\label{exm.mv} Let $\mathcal{Y}=\{-1, +1\}$.  Partition $\mathcal{X}$
to $\mathcal{C}_1, \cdots, \mathcal{C}_K$, and use $\mathcal{C}(x)$
to denote the  set to which $x$ belongs. A new sample $x_a\in
\mathcal{X}$ is labeled by
\[\mathcal{A}_\mathbf{s}(x_a)\triangleq \left\{\begin{array}{ll} 1, &
\mbox{if}\,\,\sum_{s_i\in \mathcal{C}(x_a)}
\mathbf{1}(s_{i|y}=1)\geq \sum_{s_i\in \mathcal{C}(x_a)}
\mathbf{1}(s_{i|y}=-1); \\ -1, & \mbox{otherwise.}
\end{array}\right.
\]
If the loss function is $l(\mathcal{A}_s, z) =f(z_{|y},
\mathcal{A}_\mathbf{s}(z_{|x}))$ for some function $f$, then
 MV is $(2K, 0)$ robust.
\end{example}

 MV algorithm has a natural partition of the sample space that
makes it robust.
 Another
  class of robust algorithms
  are
those that have approximately the same testing loss for  testing
samples that are close (in the sense of geometric distance) to each
other, since we can partition the sample space with norm balls. The
next theorem  states that an algorithm is robust if two samples
being close implies that they have similar testing error.
\begin{theorem}\label{thm.nearby}Fix $\gamma>0$ and metric $\rho$ of $\mathcal{Z}$. Suppose $\mathcal{A}$ satisfies
\[\left|l(\mathcal{A}_{\mathbf{s}}, z_1)-l(\mathcal{A}_{\mathbf{s}}, z_2)\right| \leq \epsilon(\mathbf{s}),\quad \forall z_1, z_2: z_1\in \mathbf{s},\, \rho(z_1, z_2)\leq \gamma,\]
and $\mathcal{N}(\gamma/2, \mathcal{Z}, \rho)<\infty$. Then
$\mathcal{A}$ is $\big(\mathcal{N}(\gamma/2, \mathcal{Z},
\rho),\,\epsilon(\mathbf{s}) \big)$-robust.
\end{theorem}
\begin{proof}Let $\{c_1, \cdots, c_{\mathcal{N}(\gamma/2, \mathcal{Z},
\rho)}\}$ be a $\gamma/2$-cover of $\mathcal{Z}$.
 whose existence is
guaranteed by the definition of covering number. Let
$\hat{C}_i=\{z\in \mathcal{Z}| \rho(z, c_i) \leq \gamma/2\}$, and
$C_i=\hat{C}_i \bigcap \big(\bigcup_{j=1}^{i-1} \hat{C}_j\big)^c$.
Thus, $C_1,\cdots, C_{\mathcal{N}(\gamma/2, \mathcal{Z}, \rho)}$ is
a partition of $\mathcal{Z}$, and satisfies
\[z_1, z_2\in C_i \quad\Longrightarrow \quad \rho(z_1, z_2) \leq
\rho(z_1,c_i)+\rho(z_2,c_i)\leq \gamma.\] Therefore,
\[\left|l(\mathcal{A}_{\mathbf{s}}, z_1)-l(\mathcal{A}_{\mathbf{s}}, z_2)\right| \leq
\epsilon(\mathbf{s}),\quad \forall z_1, z_2: z_1\in \mathbf{s},\,
\rho(z_1, z_2)\leq \gamma,\] implies
\[z_1\in \mathbf{s}\,\,z_1, z_2\in C_i \quad\Longrightarrow \left|l(\mathcal{A}_{\mathbf{s}}, z_1)-l(\mathcal{A}_{\mathbf{s}}, z_2)\right| \leq
\epsilon(\mathbf{s}),\] and the theorem follows.
\end{proof}
Theorem~\ref{thm.nearby} immediately leads to the next example:  if
the testing error given the output of an algorithm is Lipschitz
continuous, then the algorithm is robust.
\begin{example}[Lipschitz continuous functions]If $\mathcal{Z}$ is compact w.r.t.\ metric $\rho$,
$l(\mathcal{A}_{\mathbf{s}}, \cdot)$ is Lipschitz continuous with
Lipschitz constant $c(\mathbf{s})$, i.e.,
\[\left|l(\mathcal{A}_{\mathbf{s}}, z_1)-l(\mathcal{A}_{\mathbf{s}},
z_2)\right| \leq c(\mathbf{s}) \rho(z_1,z_2),\quad \forall z_1, z_2
\in \mathcal{Z},
\] then $\mathcal{A}$ is $\big(\mathcal{N}(\gamma/2, \mathcal{Z}, \rho),\,c(\mathbf{s})\gamma\big)$-robust for all $\gamma>0$.
\end{example}

Theorem~\ref{thm.nearby} also implies that SVM, Lasso, feed-forward
neural network and PCA are robust, as stated in
 Example~\ref{exm.svm} to Example~\ref{exm.PCA}. The proofs are
deferred to Appendix~\ref{app.svm} to~\ref{app.PCA}.
\begin{example}[Support Vector Machine]\label{exm.svm}Let
$\mathcal{X}$ be compact.
 Consider the standard SVM
 formulation~\citep{Cortes95,Scholkopf02}
\begin{equation*}\begin{split} \mbox{Minimize:}_{\mathbf{w},d}\quad &c\|w\|_{\mathcal{H}}^2+\frac{1}{n}\sum_{i=1}^n
{\xi_i}\\
\mbox{s. t.} \quad & 1-s_{i|y}[\langle w,\, \phi(s_{i|x}) \rangle+d] \leq \xi_i;\\
&\xi_i\geq 0.
\end{split}\end{equation*}
Here $\phi(\cdot)$ is a feature mapping, $\|\cdot\|_{\mathcal{H}}$
is its RKHS kernel, and $k(\cdot,\cdot)$ is the kernel function.
 Let $l(\cdot, \cdot)$ be the hinge-loss, i.e., $l\big((w,d),
z\big)=[1-z_{|y}(\langle w, \phi(z_{|x})\rangle+d)]^+$, and define
$f_{\mathcal{H}}(\gamma)\triangleq \max_{\mathbf{a},\mathbf{b}\in
\mathcal{X}, \|\mathbf{a}-\mathbf{b}\|_2\leq \gamma }
\big(k(\mathbf{a},
\mathbf{a})+k(\mathbf{b},\mathbf{b})-2k(\mathbf{a},\mathbf{b})\big)$.
If $k(\cdot,\cdot)$ is continuous,
 then for any $\gamma>0$, $f_{\mathcal{H}}(\gamma)$ is finite, and
SVM is $(2\mathcal{N}(\gamma/2, \mathcal{X}, \|\cdot\|_2),
\sqrt{f_{\mathcal{H}}(\gamma)/c})$ robust.
\end{example}

\begin{example}[Lasso]\label{exm.lasso} Let $\mathcal{Z}$ be compact and the loss function be
$l(A_{\mathbf{s}}, z)= |z_{|y}- A_{\mathbf{s}}(z_{|x})|$. Lasso
\citep{Tibshirani96}, which is the following regression formulation:
\begin{equation}\label{equ.lasso}
\begin{split}\min_{w}:\,\, &\frac{1}{n}\sum_{i=1}^n (s_{i|y} -w^\top s_{i|x}
)^2+c\|w\|_1,
\end{split}
\end{equation} is $\big(\mathcal{N}(\gamma/2, \mathcal{Z}, \|\cdot\|_{\infty}),\,(Y(\mathbf{s})/c+1)\gamma\big)$-robust for all
$\gamma>0$, where $Y(\mathbf{s})\triangleq
\frac{1}{n}\sum_{i=1}^n{s_{i|y}}^2$ .
\end{example}

\begin{example}[Feed-forward Neural Networks]\label{exm.NN}Let $\mathcal{Z}$ be compact and
 the loss function be $l(\mathcal{A}_{\mathbf{s}}, z)= |z_{|y}- \mathcal{A}_{\mathbf{s}}(z_{|x})|$.
Consider  the $d$-layer neural network (trained on $\mathbf{s}$),
which is the following predicting rule given an input $x\in
\mathcal{X}$
\begin{equation*}\begin{split}x^0&:= z_{|x}\\
\forall  v=1,\cdots, d-1:\qquad x^v_i&:= \sigma
(\sum_{j=1}^{N_{v-1}} w^{v-1}_{ij} x^{v-1}_j);\quad i=1,\cdots, N_v;
\\\mathcal{A}_{\mathbf{s}}(x)&:= \sigma (\sum_{j=1}^{N_{d-1}} w^{d-1}_j x^{d-1}_j
);\end{split}\end{equation*}
 If there exists  $\alpha, \beta$ such that the $d$-layer neural network satisfying that
$|\sigma(a)-\sigma(b)|\leq \beta |a-b|$, and $\sum_{j=1}^{N_v}
|w^v_{ij}|\leq \alpha$ for all $v, i$, then  it is
$\big(\mathcal{N}(\gamma/2, \mathcal{Z},
\|\cdot\|_{\infty}),\,\alpha^d\beta^d\gamma\big)$-robust, for all
$\gamma>0$.
\end{example}We remark that in Example~\ref{exm.NN}, the number of hidden units in each layer has no
effect on the robustness of the algorithm and consequently the bound
on the testing error. This indeed agrees with \citet{Bartlett98},
where the author showed (using a different approach based on
fat-shattering dimension) that for neural networks, the weight plays
a more important role than the number of hidden units.

The next example considers an unsupervised learning algorithm,
namely the principal component analysis. We show that it is robust
if the sample space is {\em bounded}. Note that, this does not
contradict with the well known fact that the principal component
analysis is sensitive to outliers which are far away from the
origin.
\begin{example}[Principal Component Analysis (PCA)]\label{exm.PCA} Let $\mathcal{Z}\subset
\mathbb{R}^m$, such that $\max_{z\in \mathcal{Z}} \|z\|_2 \leq B$.
If  the loss function  is $l((w_1,\cdots, w_d),z) =
\sum_{k=1}^d(w_k^\top z)^2$, then finding the first $d$ principal
components, which solves the following optimization problem of
$w_1,\cdots, w_d\in \mathbb{R}^m$,
\begin{equation*}\begin{split}\mbox{Maximize:} \quad &  \sum_{i=1}^n \sum_{k=1}^d (w_k^\top s_i)^2\\
\mbox{Subject to:}\quad & \|w_k\|_2 =1,\quad k=1,\cdots, d;\\
& w_i^\top w_j=0, \quad i\not= j.
\end{split}\end{equation*}
is $(\mathcal{N}(\gamma/2, \mathcal{Z}, \|\cdot\|_2), 2d\gamma
B)$-robust.
\end{example}

The last example is large-margin classification, which is a
generalization of Example~\ref{exm.marginmoti}. We need the
following standard definition \citep[e.g.,][]{Bartlett98} of the
distance of a point to a classification rule.
\begin{definition}Fix a metric $\rho$ of $\mathcal{X}$. Given   a classification rule $\Delta$ and $x\in
\mathcal{X}$, the {\em distance} of $x$ to $\Delta$ is
\[\mathcal{D}(x, \Delta)\triangleq \inf\{c\geq 0|\exists x'\in \mathcal{X}:\, \rho(x, x')\leq c, \Delta(x)\not=\Delta(x')\}.\]
\end{definition}

A large margin classifier is a classification rule such that most of
the training samples are ``far away'' from the classification
boundary.

\begin{example}[Large-margin classifier]\label{exm.margin}If there exist $\gamma$ and
$\hat{n}$ such that
\[\sum_{i=1}^n \mathbf{1}\big(\mathcal{D}(s_{i|x}, \mathcal{A}_s)> \gamma \big) \geq \hat{n},\]
then
 algorithm $\mathcal{A}$ is
$(2\mathcal{N}(\gamma/2, \mathcal{X}, \rho), 0, \hat{n})$ pseudo
robust, provided that $\mathcal{N}(\gamma/2, \mathcal{X},
\rho)<\infty$.
\end{example}
Note that if we take $\rho$ to be the Euclidean norm, and let
$\hat{n}=n$, then we recover Example~\ref{exm.marginmoti}.

\section{Necessity of Robustness}\label{sec.equivalence}
Thus far we have considered finite sample generalization bounds of
robust algorithms. We now turn to asymptotic analysis, i.e., we are
given an increasing set of training samples $\mathbf{s}=(s_1,
s_2,\cdots)$ and   tested on an increasing set of testing samples
$\mathbf{t}=(t_1, t_2,\cdots)$. We use $\mathbf{s}(n)$ and
$\mathbf{t}(n)$ to denote the first $n$ elements of   training
samples and testing samples respectively. For succinctness, we let
$\mathcal{L}(\cdot,\cdot)$ to be the average loss given a set of
samples, i.e., for $h\in \mathcal{H}$,
\[\mathcal{L}(h,\,\mathbf{t}(n))\equiv\frac{1}{n}\sum_{i=1}^n
l(h,t_i).\]

We
 show in this section that robustness is  an essential property of successful
learning. In particular, a (weaker) notion of robustness
characterizes generalizability, i.e., a learning algorithm
generalizes if and only if it is weakly robust. To make this
precise, we define the notion of generalizability and weak
robustness first.

\begin{definition}
\begin{enumerate}
\item
A learning algorithm $\mathcal{A}$ {\em generalizes w.r.t.\
$\mathbf{s}$} if
\[\limsup_{n}\Big\{\mathbb{E}_{t}\Big(l(\mathcal{A}_{\mathbf{s}(n)},
t)\Big)-\mathcal{L}(\mathcal{A}_{\mathbf{s}(n)},\mathbf{s}(n))\Big\}\leq
0.\]
\item A learning algorithm $\mathcal{A}$ {\em generalize w.p.\ 1} if
it generalize  w.r.t.\ almost every $\mathbf{s}$.
\end{enumerate}
\end{definition}
We remark that the proposed notion of generalizability differs
slightly from the standard one in the sense that the latter requires
that the empirical risk and the expected risk converges in mean,
while the proposed notion requires convergence w.p.1. It is
straightforward that the proposed notion implies the standard one.

\begin{definition}
\begin{enumerate}
\item
 A learning algorithm $\mathcal{A}$ is {\em weakly robust w.r.t
$\mathbf{s}$ } if there exists a sequence of
$\{\mathcal{D}_n\subseteq \mathcal{Z}^n\}$ such that
$\pr(\mathbf{t}(n)\in \mathcal{D}_n) \rightarrow 1$, and
\[\limsup_{n}\left\{\max_{\hat{\mathbf{s}}(n)\in \mathcal{D}_n}\big[ \mathcal{L}(\mathcal{A}_{\mathbf{s}(n)}, \hat{\mathbf{s}}(n))- \mathcal{L}(\mathcal{A}_{\mathbf{s}(n)},  \mathbf{s}(n)) \big]\right\}\leq
0.\] \item A learning algorithm $\mathcal{A}$ is {\em a.s. weakly
robust} if it is robust  w.r.t. almost every $\mathbf{s}$.
\end{enumerate}
\end{definition}
We briefly comment on the definition of weak robustness. Recall that
the definition of robustness requires that the sample space can be
partitioned into disjoint subsets such that if a testing sample
belongs to the same partitioning set of a training sample, then they
have similar loss. Weak robustness generalizes such notion by
considering the average loss of testing samples and training
samples. That is, if for a large (in the probabilistic sense) subset
of $\mathcal{Z}^n$, the testing error is close to the training
error, then the algorithm is weakly robust. It is easy to see, by
Breteganolle-Huber-Carol lemma, that if for any fixed $\epsilon>0$
there exists $K$ such that $\mathcal{A}$ is $(K, \epsilon)$ robust,
then $\mathcal{A}$ is weakly robust.

We now establish the main result of this section: weak robustness
and generalizability are equivalent.

\begin{theorem}\label{thm.ns} An algorithm $\mathcal{A}$ generalizes
w.r.t.\
$\mathbf{s}$ if and only if it is weakly robust w.r.t. $\mathbf{s}$.
\end{theorem}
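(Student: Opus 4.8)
The plan is to prove both implications directly. Write $\hat{l}_n \triangleq \mathbb{E}_t\, l(\mathcal{A}_{\mathbf{s}(n)},t)$ and $l_{\mathrm{emp},n}\triangleq \mathcal{L}(\mathcal{A}_{\mathbf{s}(n)},\mathbf{s}(n))$ for the expected and empirical losses of the hypothesis trained on the first $n$ samples of the fixed sequence $\mathbf{s}$. The engine behind both directions is the elementary identity that, since the testing samples are i.i.d.\ with $\mathbb{E}\, l(\mathcal{A}_{\mathbf{s}(n)},t_i)=\hat{l}_n$ for a fixed $\mathbf{s}(n)$,
\[\hat{l}_n=\mathbb{E}_{\mathbf{t}(n)}\big[\mathcal{L}(\mathcal{A}_{\mathbf{s}(n)},\mathbf{t}(n))\big],\]
together with Hoeffding's inequality, which controls how tightly the random average $\mathcal{L}(\mathcal{A}_{\mathbf{s}(n)},\mathbf{t}(n))=\frac1n\sum_{i=1}^n l(\mathcal{A}_{\mathbf{s}(n)},t_i)$ of bounded ($[0,M]$) i.i.d.\ terms concentrates about $\hat{l}_n$.

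For the direction ``weak robustness $\Rightarrow$ generalization,'' I would take the sets $\mathcal{D}_n$ guaranteed by weak robustness and set $\epsilon_n\triangleq \max_{\hat{\mathbf{s}}(n)\in\mathcal{D}_n}[\mathcal{L}(\mathcal{A}_{\mathbf{s}(n)},\hat{\mathbf{s}}(n))-l_{\mathrm{emp},n}]$, so that $\limsup_n\epsilon_n\leq0$ and $p_n\triangleq\pr(\mathbf{t}(n)\in\mathcal{D}_n)\to1$. Splitting the identity above according to whether $\mathbf{t}(n)\in\mathcal{D}_n$, and bounding the loss by $l_{\mathrm{emp},n}+\epsilon_n$ on $\mathcal{D}_n$ (weak robustness) and by $M$ on its complement (boundedness), yields
\[\hat{l}_n-l_{\mathrm{emp},n}\leq (1-p_n)(M-l_{\mathrm{emp},n})+\epsilon_n p_n.\]
Since $0\leq l_{\mathrm{emp},n}\leq M$, $p_n\to1$, and $\epsilon_n$ is bounded, the first term vanishes and $\limsup_n \epsilon_n p_n=\limsup_n\epsilon_n\leq0$, giving generalization.

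For the converse, ``generalization $\Rightarrow$ weak robustness,'' the task is to exhibit a suitable sequence $\mathcal{D}_n$. I would pick any $\eta_n\to0$ with $n\eta_n^2\to\infty$ (e.g.\ $\eta_n=n^{-1/4}$) and define $\mathcal{D}_n\triangleq\{\hat{\mathbf{s}}(n)\in\mathcal{Z}^n:\ \mathcal{L}(\mathcal{A}_{\mathbf{s}(n)},\hat{\mathbf{s}}(n))\leq \hat{l}_n+\eta_n\}$, which is a deterministic subset of $\mathcal{Z}^n$ depending only on the fixed $\mathbf{s}$. Hoeffding's inequality gives $\pr(\mathbf{t}(n)\notin\mathcal{D}_n)\leq \exp(-2n\eta_n^2/M^2)\to0$, so $p_n\to1$; and by construction every $\hat{\mathbf{s}}(n)\in\mathcal{D}_n$ satisfies $\mathcal{L}(\mathcal{A}_{\mathbf{s}(n)},\hat{\mathbf{s}}(n))-l_{\mathrm{emp},n}\leq(\hat{l}_n-l_{\mathrm{emp},n})+\eta_n$, whose $\limsup$ is $\leq0$ by the generalization hypothesis. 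Hence $\mathcal{A}$ is weakly robust.

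The routine parts are the expectation split and the Hoeffding estimate; the only genuine design choice—and the step I expect to require the most care—is the construction of $\mathcal{D}_n$ in the converse, namely balancing $\eta_n$ so that it is small enough for the $\limsup$ bound yet large enough (relative to the $n^{-1/2}$ fluctuation scale) that $\pr(\mathbf{t}(n)\in\mathcal{D}_n)\to1$. Everything else follows from boundedness of the loss and elementary $\limsup$ manipulations.
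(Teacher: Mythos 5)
Your proof is correct. The sufficiency direction (weak robustness $\Rightarrow$ generalization) is essentially the paper's argument: condition on $\mathbf{t}(n)\in\mathcal{D}_n$, bound the loss by the training error plus the robustness gap there and by $M$ on the complement. The necessity direction, however, is genuinely different. The paper proves the contrapositive: it first establishes a lemma stating that failure of weak robustness forces the existence of $\epsilon^*,\delta^*>0$ with $\mathrm{Pr}\big(\mathcal{L}(\mathcal{A}_{\mathbf{s}(n)},\mathbf{t}(n))\geq \mathcal{L}(\mathcal{A}_{\mathbf{s}(n)},\mathbf{s}(n))+\epsilon^*\big)\geq\delta^*$ for infinitely many $n$ (itself proved by contradiction via a diagonal choice $v(n)$ over candidate tolerances $\epsilon_v=\delta_v=1/v$), and then combines this with Hoeffding to contradict generalization. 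You instead argue directly: you exhibit the witnessing sets $\mathcal{D}_n=\{\hat{\mathbf{s}}(n):\mathcal{L}(\mathcal{A}_{\mathbf{s}(n)},\hat{\mathbf{s}}(n))\leq \hat{l}_n+\eta_n\}$ with $\eta_n\to 0$ and $n\eta_n^2\to\infty$, so that Hoeffding gives $\mathrm{Pr}(\mathbf{t}(n)\in\mathcal{D}_n)\to 1$ while the generalization hypothesis controls $\sup_{\mathcal{D}_n}\mathcal{L}-l_{\mathrm{emp},n}\leq(\hat{l}_n-l_{\mathrm{emp},n})+\eta_n$. Your construction is shorter, avoids the double negation and the subsequence bookkeeping, and produces an explicit robustness certificate with a rate ($\eta_n=n^{-1/4}$); the paper's contrapositive route instead makes explicit what quantitative failure of weak robustness looks like, which is the content of its Lemma. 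One cosmetic remark: in the sufficiency step your assertion $\limsup_n \epsilon_n p_n=\limsup_n\epsilon_n$ need not hold as an equality when $\epsilon_n<0$, but the inequality $\limsup_n\epsilon_n p_n\leq 0$ that you actually need follows immediately from $\limsup_n\epsilon_n\leq 0$ and $0\leq p_n\leq 1$.
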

\begin{proof}
We prove the sufficiency of weak robustness first. When
$\mathcal{A}$ is weakly robust w.r.t. $\mathbf{s}$, by definition
there exists $\{D_n\}$ such that
 for any $\delta,\,\epsilon>0$, there exists $N(\delta,
\epsilon)$ such that for all $n>N(\delta, \epsilon)$,
$\mathrm{Pr}(\mathbf{t}(n)\in D_n)> 1-\delta$, and
\begin{equation}\label{equ.proofinns}\sup_{\hat{\mathbf{s}}(n)\in D_n}
\mathcal{L}(\mathcal{A}_{\mathbf{s}(n)},\hat{\mathbf{s}}(n))-\mathcal{L}(\mathcal{A}_{\mathbf{s}(n)},
\mathbf{s}(n))< \epsilon.\end{equation} Therefore, the following
holds for any $n>N(\delta, \epsilon)$,
\begin{equation*}
\begin{split}&\mathbb{E}_{t}\Big(l(\mathcal{A}_{\mathbf{s}(n)},
t)\Big)-\mathcal{L}(\mathcal{A}_{\mathbf{s}(n)},\mathbf{s}(n))
\\=&\mathbb{E}_{\mathbf{t}(n)}\Big(\mathcal{L}(\mathcal{A}_{\mathbf{s}(n)},
\mathbf{t}(n))\Big)-\mathcal{L}(\mathcal{A}_{\mathbf{s}(n)},\mathbf{s}(n))\\
= &\mathrm{Pr}(\mathbf{t}(n)\not\in
D_n)\mathbb{E}\Big(\mathcal{L}(\mathcal{A}_{\mathbf{s}(n)},\,
\mathbf{t}(n))|\mathbf{t}(n)\not\in D_n
\Big)+\mathrm{Pr}(\mathbf{t}(n)\in
D_n)\mathbb{E}\Big(\mathcal{L}(\mathcal{A}_{\mathbf{s}(n)},
\mathbf{t}(n))|\mathbf{t}(n)\in D_n \Big)
\\&\quad-\mathcal{L}(\mathcal{A}_{\mathbf{s}(n)},\mathbf{s}(n))\\
{\leq}& \delta M+\sup_{\hat{\mathbf{s}}(n)\in D_n}\big\{
\mathcal{L}(\mathcal{A}_{\mathbf{s}(n)},\hat{\mathbf{s}}(n))
-\mathcal{L}(\mathcal{A}_{\mathbf{s}(n)},\mathbf{s}(n)) \big\}\leq
\delta M+\epsilon.
\end{split}
\end{equation*}
Here, the first equality holds by i.i.d. of $\mathbf{t}(n)$, and the
second equality holds by conditional expectation. The inequalities
hold due to the assumption that the loss function is upper bounded
by $M$, as well as~(\ref{equ.proofinns}).

 We thus conclude
that
 the algorithm $\mathcal{A}$ generalizes for
$\mathbf{s}$, because $\epsilon,\,\delta$ can be arbitrary.

Now we turn to the necessity of weak robustness. First, we establish
the following lemma.
\begin{lemma}\label{pro.nonrobust}Given $\mathbf{s}$, if algorithm $\mathcal{A}$
is not weakly robust w.r.t.~$\mathbf{s}$,  then there exists
$\epsilon^*,\,\delta^*>0$ such that the following holds for
infinitely many $n$,
\begin{equation}\label{equ.generalnonrobust}\mathrm{Pr}\Big(\mathcal{L}(\mathcal{A}_{\mathbf{s}(n)},\mathbf{t}(n))\geq
\mathcal{L}(\mathcal{A}_{\mathbf{s}(n)},\mathbf{s}(n))+\epsilon^*\Big)\geq
\delta^*.\end{equation}
\end{lemma}
\begin{proof}
We prove the lemma by contradiction. Assume that such $\epsilon^*$
and $\delta^*$ do not exist. Let $\epsilon_v=\delta_v=1/v$ for
$v=1,2\cdots$, then there exists a non-decreasing sequence
$\{N(v)\}_{v=1}^\infty$ such that for all $v$,  if $n\geq N(v)$ then
$\mathrm{Pr}\Big(\mathcal{L}(\mathcal{A}_{\mathbf{s}(n)},\mathbf{t}(n))\geq
\mathcal{L}(\mathcal{A}_{\mathbf{s}(n)},\mathbf{s}(n))+\epsilon_v\Big)<
\delta_v$. For each $n$, define the following set:
\[\mathcal{D}^v_n\triangleq \Large\{\hat{\mathbf{s}}(n)|\mathcal{L}(\mathcal{A}_{\mathbf{s}(n)},\hat{\mathbf{s}}(n))-
\mathcal{L}(\mathcal{A}_{\mathbf{s}(n)},\mathbf{s}(n))
<\epsilon_v\Large\}.\] Thus, for $n\geq N(v)$ we have
\begin{equation*}\begin{split}
\mathrm{Pr}(\mathbf{t}(n)\in
\mathcal{D}_n^v)&=1-\mathrm{Pr}\Big(\mathcal{L}(\mathcal{A}_{\mathbf{s}(n)},\mathbf{t}(n))\geq
\mathcal{L}(\mathcal{A}_{\mathbf{s}(n)},\mathbf{s}(n))+\epsilon_v\Big)>
1-\delta_v.\end{split}\end{equation*} For $n\geq N(1)$, define
$\mathcal{D}_n\triangleq \mathcal{D}_n^{v(n)}$, where:
$v(n)\triangleq\max\big(v|N(t)\leq n;\,\,v\leq n\big)$. Thus for all
$n\geq N(1)$ we have that $\mathrm{Pr}(\mathbf{t}(n)\in
\mathcal{D}_n)> 1-\delta_{v(n)}$ and $\sup_{\hat{\mathbf{s}}(n)\in
\mathcal{D}_n}
\mathcal{L}(\mathcal{A}_{\mathbf{s}(n)},\hat{\mathbf{s}}(n))-\mathcal{L}(\mathcal{A}_{\mathbf{s}(n)},\mathbf{s}(n))<
\epsilon_{v(n)}$. Note that  $v(n)\uparrow \infty$, it follows that
$\delta_{v(n)}\rightarrow 0$ and $\epsilon_{v(n)}\rightarrow 0$.
Therefore, $\mathrm{Pr}(\mathbf{t}(n)\in \mathcal{D}_n)\rightarrow
1$, and
\[\limsup_{n\rightarrow\infty}\Big\{\sup_{\hat{\mathbf{s}}(n)\in \mathcal{D}_n}
\mathcal{L}(\mathcal{A}_{\mathbf{s}(n)},\hat{\mathbf{s}}(n))-\mathcal{L}(\mathcal{A}_{\mathbf{s}(n)},\mathbf{s}(n))\Big\}\leq
0.\] That is, $\mathcal{A}$ is weakly robust w.r.t. $\mathbf{s}$,
which is a desired contradiction.
\end{proof}

We now prove the necessity of weak robustness. Recall that
$l(\cdot,\cdot)$ is uniformly bounded. Thus by Hoeffding's
inequality we have that for any $\epsilon,\delta$, there exists
$n^*$ such that for any $n>n^*$, with probability at least
$1-\delta$, we have
 $\Big|\frac{1}{n}\sum_{i=1}^n
l(\mathcal{A}_{\mathbf{s}(n)},\,t_i) -\mathbb{E}_{t}
 (l(\mathcal{A}_{\mathbf{s}(n)},\,t))\Big|\leq {\epsilon}$.
This implies that
\begin{equation}\label{equ.convergeinprob}\mathcal{L}(\mathcal{A}_{\mathbf{s}(n)},
\mathbf{t}(n))-\mathbb{E}_{t}l(\mathcal{A}_{\mathbf{s}(n)},
t)\,\,\stackrel{\mathrm{Pr}}{\longrightarrow} 0.\end{equation}
 Since algorithm
$\mathbb{A}$ is not robust, Lemma~\ref{pro.nonrobust} implies that
(\ref{equ.generalnonrobust}) holds for infinitely many $n$. This,
combined with Equation~(\ref{equ.convergeinprob}) implies that  for
infinitely many $n$,
\[\mathbb{E}_{t}l(\mathcal{A}_{\mathbf{s}(n)},
t)\geq
\mathcal{L}(\mathcal{A}_{\mathbf{s}(n)},\mathbf{s}(n))+\frac{\epsilon^*}{2},\]
which means that $\mathcal{A}$ does not generalize. Thus, the
necessity of weak robustness is established.
\end{proof}

Theorem~\ref{thm.ns} immediately leads to the following corollary.
\begin{corollary}An algorithm $\mathcal{A}$ generalizes w.p.\ 1 if
and only if it is a.s.\ weakly robust.
\end{corollary}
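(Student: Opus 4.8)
The plan is to obtain the corollary directly from Theorem~\ref{thm.ns} by applying the per-sequence equivalence pointwise over $\mathbf{s}$. First I would introduce the two events, viewed as subsets of the space of infinite training sequences equipped with the underlying product measure:
\[
G\triangleq\{\mathbf{s}:\mathcal{A}\text{ generalizes w.r.t.\ }\mathbf{s}\},
\qquad
W\triangleq\{\mathbf{s}:\mathcal{A}\text{ is weakly robust w.r.t.\ }\mathbf{s}\}.
\]
By the definitions of ``generalize w.p.\ 1'' and ``a.s.\ weakly robust,'' the statement $\mathcal{A}$ generalizes w.p.\ 1 is precisely $\mathrm{Pr}(G)=1$, and the statement $\mathcal{A}$ is a.s.\ weakly robust is precisely $\mathrm{Pr}(W)=1$.

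The key observation is that Theorem~\ref{thm.ns} asserts the equivalence ``generalizes w.r.t.\ $\mathbf{s}$ if and only if weakly robust w.r.t.\ $\mathbf{s}$'' for \emph{every individual} sequence $\mathbf{s}$, not merely in an aggregate or almost-sure sense. Consequently the sets $G$ and $W$ are literally identical as subsets of the sequence space: $\mathbf{s}\in G$ holds if and only if $\mathbf{s}\in W$ holds. Once this set identity $G=W$ is in hand, the corollary is immediate, since $\mathrm{Pr}(G)=1$ if and only if $\mathrm{Pr}(W)=1$, which is exactly the asserted equivalence between generalizing w.p.\ 1 and being a.s.\ weakly robust.

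I expect no genuine obstacle here, as all the substantive work is carried out in Theorem~\ref{thm.ns}; this corollary is essentially a restatement obtained by quantifying the per-sequence equivalence against the underlying measure. The only point that would ordinarily require care is the measurability of $G$ and $W$, so that the probabilities $\mathrm{Pr}(G)$ and $\mathrm{Pr}(W)$ are well defined. Since the paper explicitly sets aside measurability concerns, this raises no difficulty; moreover, the exact identity $G=W$ renders any such concern moot, because whatever measurability status one event has, the other inherits verbatim.
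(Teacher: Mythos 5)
Your argument is correct and is exactly the route the paper intends: the paper states that Theorem~\ref{thm.ns} ``immediately leads to'' the corollary, and the implicit reasoning is precisely your observation that the per-sequence equivalence makes the two events coincide as sets, so one has full measure if and only if the other does.
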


\section{Discussion}
In this paper we investigated the generalization ability of learning
algorithm based on their robustness:  the property that if a testing
sample is ``similar'' to a training sample, then its loss is close
to the training error. This provides a novel approach, different
from the complexity or stability argument, in studying the
performance of learning algorithms. We further showed that a weak
notion of robustness characterizes generalizability, which implies
that robustness is a fundamental property for learning algorithms to
work.

Before concluding the paper, we outline several directions for
future research.
\begin{itemize}
\item {\em Adaptive partition:} In Definition~\ref{def.robustalgorithm} when the notion of robustness
was introduced, we required that the partitioning of $\mathcal{Z}$
into $K$ sets is {\em fixed}. That is, regardless of the training
sample set, we  partition $\mathcal{Z}$ into the same $K$ sets. A
  natural and interesting question is what if such fixed
partition does not exist, while instead we can only partition
$\mathcal{Z}$ into $K$ sets {\em adaptively}, i.e., for different
training set we will have a different partitioning of $\mathcal{Z}$.
Adaptive partition setup can be used to study algorithms such as
k-NN. Our current proof technique does not straightforwardly extend
to such a setup, and we would like to understand whether a
meaningful generalization bound under this weaker notion of
robustness can be obtained.
\item {\em Mismatched datasets:} One advantage of
algorithmic robustness framework is the ability to handle
non-standard learning setups. For example, in
Section~\ref{sss.quantile} and~\ref{sec.Markovian} we derived
generalization bounds for quantile loss and for samples drawn from a
Markovian chain, respectively. A problem of the same essence is the
{\em mismatched datasets}, where the training samples are generated
according to a distribution slightly different  from that of the
testing samples, e.g., the two distributions may have a small K-L
divergence. We conjecture that in this case a generalization bound
similar to Theorem~\ref{thm.main} would be possible, with  an extra
term depending on the magnitude of the difference of the two
distributions.
\item {\em Outlier removal:} One possible reason that the training
samples is generated differently from the testing sample is outlier
corruption. It is often the case that  the training sample set is
corrupted by some outliers. In addition, algorithms designed to be
outlier resistent abound in the
literature~\citep[e.g.,][]{Huber81,RousseeuwLeroy87}. The robust
framework may provide a novel approach in studying both the
generalization ability and the outlier resistent property of these
algorithms. In particular, the results reported in
Section~\ref{sss.quantile} can serve as a starting point of future
research in this direction.
\item {\em Consistency:} We addressed in this paper the relationship
between robustness and generalizability. An equally important
feature of learning algorithms is {\em consistency}: the property
that  a learning algorithm guarantees to recover the global optimal
solution as the number of training data increases. While it is
straightforward that if an algorithm minimizes the empirical error
asymptotically and also generalizes (or equivalently is {\em weakly
robust}), then it is consistent, much less is known for a necessary
condition for an algorithm to be consistent. It is certainly
interesting to investigate the relationship between consistency and
robustness, and in particular whether robustness is necessary for
consistency, at least for algorithms that asymptotically minimize
the empirical error.
\item {\em Other robust algorithms:} The
proposed robust approach considers a general learning setup.
However, except for PCA, the algorithms investigated in
Section~\ref{sec.example} all belong to the supervised learning
setting. One natural extension is to investigate other robust
unsupervised and semi-supervised learning algorithms. One difficulty
is that compared to supervised learning case, the analysis of
unsupervised/semi-supervised learning algorithms can be challenging,
due to the fact that many of them are random iterative algorithms
(e.g., k-means).

\end{itemize}

\appendix
\section{Proofs}
\subsection{Proof of Theorem~\ref{thm.pseudoquantile}}\label{app.proofof.pseudoquantile}
We  observe the following properties of quantile value and truncated
mean:
\begin{enumerate}
\item If $X$ is supported on $\mathbb{R}^+$ and $\beta_1\geq \beta_2$, then
\[\mathbb{Q}^{\beta_1}(X) \geq \mathbb{Q}^{\beta_2}(X); \quad \mathbb{T}^{\beta_1}(X) \geq \mathbb{T}^{\beta_2}(X).\]
\item If $Y$ stochastically dominates $X$, i.e., $\mathrm{Pr}(Y \geq a) \geq \mathrm{Pr}(X \geq
a)$ for all $a\in \mathbb{R}$, then for any $\beta$,
\[\mathbb{Q}^\beta(Y) \geq \mathbb{Q}^\beta(X);\quad \mathbb{T}^\beta(Y) \geq \mathbb{T}^\beta(X).\]
\item The $\beta$-truncated mean of empirical distribution of
nonnegative  $(x_1, \cdots, x_n)$ is given by
\[\min_{\alpha: 0\leq \alpha_i\leq 1/n,\, \sum_{i=1}^n \alpha_i\leq \beta} \sum_{i=1}^n \alpha_i x_i.\]
\end{enumerate}

By definition of pseudo-robustness, $\mathcal{Z}$ can be partitioned
into $K$ disjoint sets, denoted as $\{C_i\}_{i=1}^K$, and a subset
of training samples $\hat{\mathbf{s}}$ with
$|\hat{\mathbf{s}}|=\hat{n}$ such that
\[z_1\in \hat{\mathbf{s}},\,\,  z_1, z_2 \in C_i,\quad \Longrightarrow \quad \left|l(\mathcal{A}_\mathbf{s}, z_1)-l(\mathcal{A}_\mathbf{s}, z_2)\right| \leq \epsilon(\mathbf{s});\quad \forall s.\]

 Let $N_i$  be the set of index of points of
 $\mathbf{s}$
 that fall into the $C_i$.
   Let
$\mathcal{E}$ be the event that the following holds:
\begin{equation*}\sum_{i=1}^{K}\left|\frac{|N_i|}{n} -\mu(C_i)\right| \leq
\sqrt{\frac{2K\ln 2 + 2\ln(1/\delta)}{n}}.\end{equation*}From the
proof of Theorem~\ref{thm.main}, $\mathrm{Pr}(\mathcal{E}) \geq
1-\delta$. Hereafter we restrict the discussion to the case when
$\mathcal{E}$ holds.

Denote
\[v_j=\arg\min_{z\in \mathcal{C}_j} l(\mathcal{A}_{\mathbf{s}},
z).\] By symmetry, without loss of generality we assume that $0\leq
l(\mathcal{A}_{\mathbf{s}}, v_1)\leq l(\mathcal{A}_{\mathbf{s}},
v_2)\leq\cdots\leq l(\mathcal{A}_{\mathbf{s}}, v_K) \leq M$.
 Define a set of
samples $\tilde{\mathbf{s}}$ as
\[\tilde{s}_i=\left\{\begin{array}{ll} s_i & \mbox{if } s_i\in \hat{\mathbf{s}} ;\\ v_j &\mbox{if }s_i\not\in \hat{\mathbf{s}} ,\,s_i\in \mathcal{C}_j.\end{array}\right.\]
Define  discrete probability measures $\hat{\mu}$ and $\tilde{\mu}$,
supported on $\{v_1, \cdots, v_K\}$ as
\[\hat{\mu}(\{v_j\}) =\mu(\mathcal{C}_j);\quad \tilde{\mu}(\{v_j\})=
\frac{|N_j|}{n}.\] Further,  let $\tilde{\mu}_{\mathrm{emp}}$ denote
the empirical distribution of sample set $\tilde{\mathbf{s}}$.

{\bf Proof of (I)}:

 Observe that $\mu$ stochastically dominates $\hat{\mu}$, hence
\begin{equation}\label{equ.proofinquantileobserve}\mathcal{Q}(\mathcal{A}_{\mathbf{s}},\beta, \hat{\mu}) \leq
\mathcal{Q}(\mathcal{A}_{\mathbf{s}},\beta, \mu).\end{equation} Also
by definition of $\mathcal{Q}(\cdot)$ and $\hat{\mu}$,
\[\mathcal{Q}(\mathcal{A}_{\mathbf{s}},\beta, \hat{\mu})=
v_{k^*};\quad \mbox{where:}\,\, k^*=\min\{k: \sum_{i=1}^k
\hat{\mu}(v_i) \geq \beta\}.\] Let $\overline{\mathbf{s}}$ be the
set of all samples $s_i$ such that $s_i\in \hat{\mathbf{s}}$, and
$s_i \in \mathcal{C}_j$ for some $j\leq k^*$. Observe that
\begin{equation}\label{equ.proofofquatile}\forall s_i \in \hat{\mathbf{s}}:\,\,
l(\mathcal{A}_{\mathbf{s}}, s_i) \leq
v_{k^*}+\epsilon(\mathbf{s})=\mathcal{Q}(\mathcal{A}_{\mathbf{s}},\beta,
\hat{\mu})+\epsilon(\mathbf{s}) .\end{equation}

Note that $\mathcal{E}$ implies
\[\frac{1}{n} \sum_{j=1}^{k^*} \sum_{s_i\in \mathcal{C}_j} 1 \geq \sum_{j=1}^{k^*} \mu(\mathcal{C}_j)-\lambda_0 =\sum_{j=1}^k
\hat{\mu}(v_j)-\lambda_0\geq \beta-\lambda_0.\] Since
$\mathcal{A}_{\mathbf{s}}$ is pseudo robust, we have
\[\frac{1}{n}\sum_{s_i\not\in \hat{\mathbf{s}}}
=\frac{n-\hat{n}}{n}.\] Therefore
\[\frac{1}{n}\sum_{j=1}^{k^*}\sum_{s_i\in \overline{\mathbf{s}}, s_i\in \mathcal{C}_j} 1 \geq \frac{1}{n}\sum_{j=1}^{k^*}\sum_{s_i\in \mathcal{C}_j} 1 - \frac{1}{n}\sum_{s_i \not\in\hat{\mathbf{s}}} 1\geq \beta-\lambda_0-\frac{n-\hat{n}}{n}.\]
Thus, $\overline{\mathbf{s}}$ is a subset of $\mathbf{s}$ of at
least $n(\beta-\lambda_0-(n-\hat{n})/n)$ elements. Thus
(\ref{equ.proofinquantileobserve}) and (\ref{equ.proofofquatile})
lead to
\[\mathcal{Q}(\mathcal{A}_{\mathbf{s}}, \beta-\lambda_0-(n-\hat{n})/n, \mu_{\mathrm{emp}}) \leq \max \{s_i:\,s_i\in \overline{\mathbf{s}}\}\leq \mathcal{Q}(\mathcal{A}_{\mathbf{s}},\beta, \mu)+\epsilon(\mathbf{s}).\]
Thus, we establish the left inequality. The proof of the right one
is identical and hence omitted.

{\bf Proof of (II):}

 The proof constitutes four steps.

{\bf Step 1:} Observe that  $\mu$ stochastically dominates
$\hat{\mu}$, hence
\[\mathcal{T}(\mathcal{A}_{\mathbf{s}},\beta, \hat{\mu}) \leq \mathcal{T}(\mathcal{A}_{\mathbf{s}},\beta, \mu).\]

{\bf Step 2:} We prove that
\[\mathcal{T}(\mathcal{A}_{\mathbf{s}},\beta-\lambda_0, \tilde{\mu}) \leq \mathcal{T}(\mathcal{A}_{\mathbf{s}},\beta, \hat{\mu}).\]
Note that t  $\mathcal{E}$ implies  for all $j$, we have
\[\tilde{\mu}(\{v_1,\cdots, v_j\})-\lambda_0 \leq \hat{\mu}
(\{v_1,\cdots, v_j\}),\]

Therefore, there uniquely exists a non-negative integer $j^*$ and a
$c^* \in [0,1)$ such that
\[\hat{\mu}(\{v_1, \cdots, v_{j^*}\})+ c^*\hat{\mu}(\{v_{j^*+1}\})
=\beta,\]and define
\begin{equation}\label{equ.proofofquantile}\hat{\beta}=\sum_{i=1}^{j^*} \min
(\tilde{\mu}(\{v_i\}), \hat{\mu}(\{v_i\})) + c^*\min
(\tilde{\mu}(\{v_{j^*+1}\}),
\hat{\mu}(\{v_{j^*+1}\})),\end{equation} then we have
$\hat{\beta}\geq \beta-\lambda_0$, which leads to
\begin{equation*}\begin{split}&\mathcal{T}(\mathcal{A}_{\mathbf{s}},\beta-\lambda_0, \tilde{\mu})
 \leq \mathcal{T}(\mathcal{A}_{\mathbf{s}},\hat{\beta},
\tilde{\mu})\\
\stackrel{(a)}{\leq} &\sum_{i=1}^{j^*}
l(\mathcal{A}_{\mathbf{s}},v_i) \min (\tilde{\mu}(\{v_i\}),
\hat{\mu}(\{v_i\})) + c^* l(\mathcal{A}_{\mathbf{s}},v_{j^*+1})\min
(\tilde{\mu}(\{v_{j^*+1}\}), \hat{\mu}(\{v_{j^*+1}\}))\\
\leq &\sum_{i=1}^{j^*} l(\mathcal{A}_{\mathbf{s}},v_i)
\hat{\mu}(\{v_i\}) + c^* l(\mathcal{A}_{\mathbf{s}},v_{j^*+1})
\hat{\mu}(\{v_{j^*+1}\})
=\mathcal{T}(\mathcal{A}_{\mathbf{s}},\beta, \hat{\mu}),
\end{split}\end{equation*}where $(a)$ holds because Equation~(\ref{equ.proofofquantile}) essentially means that
$\mathcal{T}(\mathcal{A}_{\mathbf{s}},\hat{\beta}, \tilde{\mu})$ is
a weighted sum with total weights equals to $\hat{\beta}$, which
puts more weights on small terms, and hence is smaller.

{\bf Step 3:} We prove that \[
\mathcal{T}(\mathcal{A}_{\mathbf{s}},\beta-\lambda_0,
\tilde{\mu}_{\mathrm{emp}})-\epsilon(\mathbf{s}) \leq
\mathcal{T}(\mathcal{A}_{\mathbf{s}},\beta-\lambda_0,
\tilde{\mu}).\] Let $\tilde{\mathbf{t}}$ be a set of $n$ samples,
such that $N_j$ of them are $v_j$ for $j=1,\cdots, K$. Observe that
 $\tilde{\mu}$  is the empirical distribution of
$\tilde{\mathbf{t}}$. Further note that there is a one-to-one
mapping between samples in $\tilde{\mathbf{s}}$ and that in
$\tilde{\mathbf{t}}$ such that each pair (say $\tilde{s}_i,
\tilde{t}_i$) of samples belongs to the same $\mathcal{C}_j$. By
definition of $\tilde{\mathbf{s}}$ this guarantees that
$|l(\mathcal{A}_{\mathbf{s}},\tilde{s}_i
)-l(\mathcal{A}_{\mathbf{s}},\tilde{t}_i )| \leq
\epsilon(\mathbf{s})$, which implies
\[
\mathcal{T}(\mathcal{A}_{\mathbf{s}},\beta-\lambda_0,
\tilde{\mu}_{\mathrm{emp}})-\epsilon(\mathbf{s}) \leq
\mathcal{T}(\mathcal{A}_{\mathbf{s}},\beta-\lambda_0,
\tilde{\mu}).\]

{\bf Step 4:} We prove that \[
\mathcal{T}(\mathcal{A}_{\mathbf{s}},\beta-\lambda_0-\frac{n-\hat{n}}{n},
\mu_{\mathrm{emp}}) \leq
\mathcal{T}(\mathcal{A}_{\mathbf{s}},\beta-\lambda_0,
\tilde{\mu}_{\mathrm{emp}}). \]

Let $\mathbb{I} =\{i: s_i=\tilde{s}_i\}$), the following holds:
\[\sum_{i=1}^n \alpha_i l(\mathcal{A}_{\mathbf{s}}, \tilde{s}_i) \geq \sum_{i\in \mathbb{I}} \alpha_i l(\mathcal{A}_{\mathbf{s}}, \tilde{s}_i) = \sum_{i\in \mathbb{I}} \alpha_i l(\mathcal{A}_{\mathbf{s}}, s_i);\quad \forall \alpha: 0\leq \alpha_i\leq \frac{1}{n};\,\, \sum_{i=1}^n{\alpha_i}=\beta-\lambda_0.\]
Note that $|\{i\not \in\mathbb{I}\}|=n-\hat{n}$, then $\sum_{i\in
\mathbb{I}} \alpha_i \geq \beta-\lambda_0 -\frac{n-\hat{n}}{n}$.
Thus we have $\forall \alpha: 0\leq \alpha_i\leq \frac{1}{n};\,\,
\sum_{i=1}^n{\alpha_i}=\beta-\lambda_0$,
\[\sum_{i\in \mathbb{I}} \alpha_i l(\mathcal{A}_{\mathbf{s}}, s_i) \geq \min_{\alpha': 0\leq \alpha_i' \leq \frac{1}{n}, \, \sum_{i=1}^n {\alpha_i'} \leq \beta-\lambda_0 -\frac{n-\hat{n}}{n}} \sum_{i=1}^n \alpha_i' l(\mathcal{A}_{\mathbf{s}}, s_i)=\mathcal{T}(\mathcal{A}_{\mathbf{s}},\beta-\lambda_0,
\tilde{\mu}_{\mathrm{emp}}).\] Therefore,
\[\sum_{i=1}^n \alpha_i l(\mathcal{A}_{\mathbf{s}}, \tilde{s}_i) \geq \mathcal{T}(\mathcal{A}_{\mathbf{s}},\beta-\lambda_0-\frac{n-\hat{n}}{n},
\mu_{\mathrm{emp}});\quad\forall \alpha: 0\leq \alpha_i\leq
\frac{1}{n};\,\, \sum_{i=1}^n{\alpha_i}=\beta-\lambda_0.\]
Minimization over $\alpha$ on both side. We proved
\[
\mathcal{T}(\mathcal{A}_{\mathbf{s}},\beta-\lambda_0-\frac{n-\hat{n}}{n},
\mu_{\mathrm{emp}}) \leq
\mathcal{T}(\mathcal{A}_{\mathbf{s}},\beta-\lambda_0,
\tilde{\mu}_{\mathrm{emp}}). \]

Combining all four steps, we proved the left inequality, i.e.,
\[\mathcal{T}(\mathcal{A}_{\mathbf{s}},\beta-\lambda_0-\frac{n-\hat{n}}{n},
\mu_{\mathrm{emp}}) -\epsilon(\mathbf{s}) \leq \mathcal{T}(
\mathcal{A}_{\mathbf{s}},\beta, \mu).\] The right inequality can be
proved  identically and hence omitted.

\subsection{Proof of Example~\ref{exm.mv}}\label{app.mv}
We can partition $\mathcal{Z}$ as $\{-1\}\times
\mathcal{C}_1,\cdots, \{-1\}\times \mathcal{C}_K, \{+1\}\times
\mathcal{C}_1,\cdots, \{+1\}\times \mathcal{C}_K$. Consider $z_a,
z_b$ that belong to a same set, then $z_{a|y}=z_{b|y}$, and $\exists
i$ such that $z_{a|x},z_{b|x} \in \mathcal{C}_i$, which by the
definition of Majority Voting algorithm implies that
$\mathcal{A}_{\mathbf{s}}(z_{a|x})=\mathcal{A}_{\mathbf{s}}(z_{b|x})$.
Thus, we have
\[l(\mathcal{A}_{\mathbf{s}}, z_a)=f(z_{a|y}, \mathcal{A}_{\mathbf{s}}(z_{a|x}))
=f(z_{b|y},
\mathcal{A}_{\mathbf{s}}(z_{b|x}))=l(\mathcal{A}_{\mathbf{s}},
z_b).\] Hence MV is $(2K, 0)$-robust.

\subsection{Proof of Example~\ref{exm.svm}}\label{app.svm}
The existence of $f_{\mathcal{H}}(\gamma)$ follows from the
compactness of $\mathcal{X}$ and continuity of $k(\cdot,\cdot)$.

To prove the robustness of SVM, let $(w^*, d^*)$ be the solution
given training data $\mathbf{s}$.
 To  avoid notation clutter, let $y_i=s_{i|y}$ and $x_i=s_{i|x}$.
Thus, we have (due to optimality of $w^*, d^*$)
\[c\|w^*\|_\mathcal{H}^2+\frac{1}{n}\sum_{i=1}^n[1-y_i(\langle w^*,\, \phi(x_i)\rangle+d^*)]^+ \leq c\|0\|_\mathcal{H}^2+\frac{1}{n}\sum_{i=1}^n[1-y_i(\langle 0,\, \phi(x_i)\rangle+0)]^+ =1,\]
which implies $\|w^*\|_{\mathcal{H}}\leq \sqrt{1/c}$. Let $c_1,
\cdots, c_{\mathcal{N}(\gamma/2, \mathcal{X}, \|\cdot\|_2)}$ be a
$\gamma/2$-cover of $\mathcal{X}$ (recall that $\mathcal{X}$ is
compact), then we can partition $\mathcal{Z}$ as
$2\mathcal{N}(\gamma/2, \mathcal{X}, \|\cdot\|_2)$ sets, such that
if $(y_1, x_1)$ and $(y_2, x_2)$ belongs to the same set, then
$y_1=y_2$ and $\|x_1-x_2\|_2 \leq \gamma/2$.

Further observe that if $y_1=y_2$ and $\|x_1-x_2\|_2 \leq \gamma/2$,
then
\begin{equation*}\begin{split}&
|l\big((w^*,d^*) ,z_1\big)-l\big((w^*,d^*),
z_2)\big)|\\&=\left|[1-y_1(\langle w^*,\,\phi(x_1)
\rangle+d^*)]^+-[1-y_2(\langle w^*,\,\phi(x_2)
\rangle+d^*)]^+\right|
\\ &\leq
\left| \langle w^*,\, \phi(x_1)-\phi(x_2)\rangle \right|\\
&\leq \|w^*\|_{\mathcal{H}} \sqrt{\langle \phi(x_1)-\phi(x_2),
\phi(x_1)-\phi(x_2)\rangle}\\ &\leq
\sqrt{f_{\mathcal{H}}(\gamma)/c}.
\end{split}\end{equation*}
Here the last inequality follows from the definition of
$f_{\mathcal{H}}$.  Hence, the example holds by
Theorem~\ref{thm.nearby}.

\subsection{Proof of Example~\ref{exm.lasso}}\label{app.lasso} It suffices to show the
following lemma, which establish that loss of Lasso solution is
Liptschitz continuous.
\begin{lemma}\label{lem.lasso}
If $w^*(\mathbf{s})$ is the  solution of Lasso given training set
$\mathbf{s}$, then
\[\left|l(w^*(\mathbf{s}), z_a)-l(w^*(\mathbf{s}), z_b)\right| \leq \big[\frac{1}{nc}\sum_{i=1}^n{s_{i|y}}^2+1\big] \|z_a-z_b\|_{\infty}.\]
\end{lemma}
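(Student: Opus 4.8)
The plan is to establish that the map $z\mapsto l(w^*(\mathbf{s}), z)$ is Lipschitz with respect to $\|\cdot\|_\infty$ with the stated constant, by combining a pointwise Lipschitz estimate for the absolute-value loss with an \emph{a priori} bound on $\|w^*(\mathbf{s})\|_1$ extracted from optimality of the Lasso solution. Write $w^*=w^*(\mathbf{s})$, and recall $l(w^*,z)=|z_{|y}-w^{*\top}z_{|x}|$.

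First I would control the difference of losses. Using $\big||a|-|b|\big|\le|a-b|$ followed by the triangle inequality and H\"older's inequality $|w^{*\top}v|\le\|w^*\|_1\|v\|_\infty$,
\begin{equation*}
\begin{split}
\left|l(w^*, z_a)-l(w^*, z_b)\right|
&=\Big|\,|z_{a|y}-w^{*\top}z_{a|x}|-|z_{b|y}-w^{*\top}z_{b|x}|\,\Big|\\
&\le |z_{a|y}-z_{b|y}|+|w^{*\top}(z_{a|x}-z_{b|x})|\\
&\le |z_{a|y}-z_{b|y}|+\|w^*\|_1\,\|z_{a|x}-z_{b|x}\|_\infty.
\end{split}
\end{equation*}
Since both $|z_{a|y}-z_{b|y}|$ and $\|z_{a|x}-z_{b|x}\|_\infty$ are bounded above by $\|z_a-z_b\|_\infty$, this yields $\left|l(w^*, z_a)-l(w^*, z_b)\right|\le(1+\|w^*\|_1)\,\|z_a-z_b\|_\infty$.

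Next I would bound $\|w^*\|_1$ using the optimality of $w^*$ for the Lasso objective~(\ref{equ.lasso}). Evaluating the objective at the minimizer $w^*$ gives a value no larger than its value at $w=0$, namely
\begin{equation*}
\frac{1}{n}\sum_{i=1}^n(s_{i|y}-w^{*\top}s_{i|x})^2+c\|w^*\|_1
\le \frac{1}{n}\sum_{i=1}^n s_{i|y}^2=Y(\mathbf{s}).
\end{equation*}
Because the squared-loss term on the left is nonnegative, dropping it gives $c\|w^*\|_1\le Y(\mathbf{s})$, i.e.\ $\|w^*\|_1\le \frac{1}{nc}\sum_{i=1}^n s_{i|y}^2$. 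Substituting this into the Lipschitz estimate above produces exactly the claimed constant $\big[\frac{1}{nc}\sum_{i=1}^n s_{i|y}^2+1\big]$. I do not expect a serious obstacle here; the only idea that is not purely mechanical is the \emph{a priori} $\ell_1$ bound obtained by comparing the optimal objective value against the zero predictor, which converts the unconstrained Lasso into one with an effective norm constraint and thereby makes the loss globally Lipschitz. Once Lemma~\ref{lem.lasso} is in hand, Example~\ref{exm.lasso} follows immediately from Theorem~\ref{thm.nearby} applied with $\rho=\|\cdot\|_\infty$ and Lipschitz constant $Y(\mathbf{s})/c+1$.
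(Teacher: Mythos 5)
Your proof is correct and follows essentially the same route as the paper's: the same comparison of the Lasso objective at $w^*$ against $w=0$ to obtain $\|w^*\|_1\le \frac{1}{nc}\sum_{i=1}^n s_{i|y}^2$, followed by the same chain of triangle and H\"older inequalities giving the Lipschitz constant $1+\|w^*\|_1$. No gaps to report.
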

\begin{proof}For succinctness we let $y_i=s_{i|y}$, $x_i=s_{i|x}$
for $i=1,\cdots, n$. Similarly, we let $y_{a(b)}=z_{a(b)|y}$ and
$x_{a(b)}=z_{a(b)|x}$.
 Since $w^*(\mathbf{s})$ is the solution of Lasso, we have (due to optimality)
\[\frac{1}{n}\sum_{i=1}^n (y_i -x_i^\top
w^*(\mathbf{s}))^2+c\|w^*(\mathbf{s})\|_1 \leq
\frac{1}{n}\sum_{i=1}^n (y_i -x_i^\top
0)^2+c\|0\|_1=\frac{1}{n}\sum_{i=1}^n{y_i}^2,\] which implies
$\|w^*\|_1\leq \frac{1}{nc}\sum_{i=1}^n{y_i}^2$. Therefore,
\begin{equation*}\begin{split}\left|l(w^*(\mathbf{s}), z_a)-l(w^*(\mathbf{s}), z_b)\right|
=&\left||y_a -w^*(\mathbf{s})
x_a|-|y_b-w^*(\mathbf{s})x_b|\right|\\
\leq &\left|(y_a -w^*(\mathbf{s})
x_a)-(y_b-w^*(\mathbf{s})x_b)\right|\\
\leq &|y_a-y_b|+\|w^*(\mathbf{s})\|_1
\|x_a-x_b\|_{\infty}\\
\leq &(\|w^*(\mathbf{s})\|_1+1)\|z_a-z_b\|_{\infty}
\\ =&\big[\frac{1}{nc}\sum_{i=1}^n{y_i}^2+1\big]
\|z_a-z_b\|_{\infty}.
\end{split}\end{equation*}Here the first two inequalities holds from
triangular inequality, and the last inequality holds due to $z=(x,
y)$.
\end{proof}

\subsection{Proof of Example~\ref{exm.NN}}\label{app.nn} To see why the example
holds,  it suffices to show the following lemma, which establishes
that the neural network mentioned is Lipschitz continuous. For
simplicity, we write the prediction given  $x\in \mathcal{X}$ as
$NN(x)$.
\begin{lemma}\label{lem.nn}
 Fixed $\alpha, \beta$, if a $d$-layer neural network satisfying that
$|\sigma(a)-\sigma(b)|\leq \beta |a-b|$, and $\sum_{j=1}^{N_v}
|w^v_{ij}|\leq \alpha$ for all $v, i$, then the following holds:
\[|l(A_{\mathbf{s}}, z)-l(A_{\mathbf{s}}, \hat{z})| \leq (1+\alpha^d\beta^d)\|z-\hat{z}\|_{\infty}.\]
\end{lemma}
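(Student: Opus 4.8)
The plan is to prove Lemma~\ref{lem.nn} in two stages: first reduce the difference of losses to a difference of network outputs, then control that output difference by propagating the input perturbation layer-by-layer. Since $l(\mathcal{A}_{\mathbf{s}}, z) = |z_{|y} - NN(z_{|x})|$, applying the triangle inequality twice (once to strip the outer absolute values, once to split the remaining difference) gives
\begin{equation*}
|l(\mathcal{A}_{\mathbf{s}}, z) - l(\mathcal{A}_{\mathbf{s}}, \hat{z})| \leq |z_{|y} - \hat{z}_{|y}| + |NN(z_{|x}) - NN(\hat{z}_{|x})|.
\end{equation*}
The first term is at most $\|z - \hat{z}\|_{\infty}$ by definition of the sup-norm, so the whole task reduces to showing that $NN(\cdot)$ is Lipschitz with constant $\alpha^d\beta^d$ in $\|\cdot\|_{\infty}$, i.e.\ $|NN(z_{|x}) - NN(\hat{z}_{|x})| \leq \alpha^d\beta^d\,\|z_{|x} - \hat{z}_{|x}\|_{\infty}$.

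For the main estimate I would induct on the layer index. Writing $x^v_i$ and $\hat{x}^v_i$ for the activation at node $i$ of layer $v$ produced by inputs $z_{|x}$ and $\hat{z}_{|x}$ respectively, I claim that $\max_i |x^v_i - \hat{x}^v_i| \leq (\alpha\beta)^v\,\|z_{|x} - \hat{z}_{|x}\|_{\infty}$ for every $v = 0, 1, \ldots, d-1$. The base case $v=0$ is immediate from the definition of the sup-norm. For the inductive step, the key computation couples the Lipschitz bound on $\sigma$ with the row-wise $\ell_1$ bound on the weights:
\begin{equation*}
\begin{split}
|x^v_i - \hat{x}^v_i| &\leq \beta\,\Big|\sum_j w^{v-1}_{ij}(x^{v-1}_j - \hat{x}^{v-1}_j)\Big| \\
&\leq \beta \sum_j |w^{v-1}_{ij}|\,\max_k |x^{v-1}_k - \hat{x}^{v-1}_k| \\
&\leq \alpha\beta\,(\alpha\beta)^{v-1}\,\|z_{|x} - \hat{z}_{|x}\|_{\infty}.
\end{split}
\end{equation*}
Applying the very same one-step estimate to the scalar output node (contributing one final factor $\alpha\beta$) yields $|NN(z_{|x}) - NN(\hat{z}_{|x})| \leq (\alpha\beta)^d\,\|z_{|x} - \hat{z}_{|x}\|_{\infty}$, and substituting this into the reduction above produces the claimed bound $(1 + \alpha^d\beta^d)\,\|z - \hat{z}\|_{\infty}$.

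The only delicate point — and really the heart of the argument — is the matching between the norm used to measure the layer activations and the constraint imposed on the weights. Measuring each layer in $\|\cdot\|_{\infty}$ is exactly what lets the row-sum bound $\sum_j |w^{v-1}_{ij}| \leq \alpha$ deliver a clean per-layer contraction factor of $\alpha$, via $|\sum_j w_{ij} a_j| \leq \big(\sum_j |w_{ij}|\big)\max_j |a_j|$, while the Lipschitz constant $\beta$ of $\sigma$ supplies the remaining factor. Had one instead propagated activations in $\|\cdot\|_2$, the relevant weight constraint would have had to be on a spectral/operator norm rather than the row $\ell_1$ norm, and the bound would not telescope to $\alpha^d\beta^d$. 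Once this norm alignment is recognized, both the induction and the final assembly are entirely routine.
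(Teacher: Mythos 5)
Your proposal is correct and follows essentially the same route as the paper's proof: split off the $|z_{|y}-\hat{z}_{|y}|$ term by the triangle inequality, then establish the $(\alpha\beta)^d$ Lipschitz constant of the network output by propagating the sup-norm perturbation one layer at a time using $|\sigma(a)-\sigma(b)|\leq\beta|a-b|$ together with the row-sum bound $\sum_j|w^{v}_{ij}|\leq\alpha$. Your explicit induction is just a cleaner phrasing of the paper's ``iterating over $d$ layers,'' and your remark on the alignment between the $\ell_\infty$ activation norm and the $\ell_1$ row constraint is a correct observation about why the telescoping works.
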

\begin{proof} Let $x_i^v$ and $\hat{x}_i^v$ be the output of the
$i^{th}$ unit of the $v^{th}$ layer for samples $z$ and $\hat{z}$
respectively. Let $\mathbf{x}^v$ and $\hat{\mathbf{x}}^v$ be the
vector such that the $i^{th}$ elements are $x_i^v$ and $\hat{x}_i^v$
respectively. From $\sum_{i=1}^{N_v} |w^v_i|\leq \alpha$ we have
\begin{equation*}
\begin{split}
|x_i^{v}-\hat{x}_i^{v}| & = \left|\sigma(\sum_{j=1}^{N_v}w^v_{ij}
x_i^{v-1})
-\sigma(\sum_{j=1}^{N_v}w^v_{ij} \hat{x}_j^{v-1})\right|\\
& \leq \beta  \left|\sum_{j=1}^{N_v}w^v_{ij} x_i^{v-1}
-\sum_{j=1}^{N_v}w^v_{ij} \hat{x}_j^{v-1} \right|\\
&\leq \beta \alpha
\|\mathbf{x}^{v-1}-\hat{\mathbf{x}}^{v-1}\|_{\infty}.
\end{split}\end{equation*}
Here, the first inequality holds from the Lipschitz condition of
$\sigma$, and the second inequality holds from $\sum_{j=1}^{N_v}
|w^v_{ij}|\leq \alpha$. Iterating over $d$ layers, we have
\[|NN(z_{|x})-NN(\hat{z}_{|x})|=|x^d-\hat{x}^d| \leq \alpha^d\beta^d
\|\mathbf{x}-\hat{\mathbf{x}}\|_\infty,\] which implies
\begin{equation*}\begin{split}|l(A_{\mathbf{s}}, z)-l(A_{\mathbf{s}},
\hat{z})|
=&\left||z_{|y} -NN(z_{|x})|-|\hat{z}_{|y} -NN(\hat{z}_{|x})|\right|\\
\leq&\|z_{|y}-\hat{z}_{|y}|+ |NN(z_{|x}) -NN(\hat{z}_{|x})|\\ \leq &
(1+\alpha^d\beta^d)\|z-\hat{z}\|_{\infty}.
\end{split}\end{equation*}This proves the lemma.
\end{proof}

\subsection{Proof of Example~\ref{exm.PCA}}\label{app.PCA}
We show that the loss to PCA is Lipschitz continuous, and then apply
Theorem~\ref{thm.nearby}.

 Let $(w^*_1(\mathbf{s}),\cdots, w^*_d(\mathbf{s}))$ be the solution of PCA
trained on $\mathbf{s}$. Thus we have
\begin{equation*}\begin{split}
&\left|l((w^*_1(\mathbf{s}),\cdots, w^*_d(\mathbf{s})), z_a)-l((w^*_1(\mathbf{s}),\cdots, w^*_d(\mathbf{s})), z_b)\right|\\
=&\left|\sum_{k=1}^d(w_k^*(\mathbf{s})^\top
z_a)^2-\sum_{k=1}^d(w_k^*(\mathbf{s})^\top
z_b)^2\right|\\
\leq & \sum_{k=1}^d \left|[w_k^*(\mathbf{s})^\top
z_a-w_k^*(\mathbf{s})^\top
z_b][w_k^*(\mathbf{s})^\top z_a+w_k^*(\mathbf{s})^\top z_b]\right|\\
\leq &2dB\|z_a-z_b\|_2,
\end{split}\end{equation*}where the last inequality holds because
$\|w_k^*(\mathbf{s})\|_2=1$ and $\|z_a\|, \|z_b\| \leq B$. Hence,
the example holds by Theorem~\ref{thm.nearby}.

\subsection{Proof of Example~\ref{exm.margin}}
Set $\hat{\mathbf{s}}$ as
\[\hat{\mathbf{s}}\triangleq \{s_i\in \mathbf{s}|\mathcal{D}(s_i,
\mathcal{A}_\mathbf{s})>\gamma\}.\]And let $c_1, \cdots,
c_{\mathcal{N}(\gamma/2,\mathcal{X}, \rho)}$ be a $\gamma/2$ cover
of $\mathcal{X}$. Thus, we can partition $\mathcal{Z}$ to
$2{\mathcal{N}(\gamma/2,\mathcal{X}, \rho)}$ subsets $\{C_i\}$, such
that if \[z_1, z_2 \in C_i;\quad\Longrightarrow\quad
y_1=y_2;\,\,\&\,\rho(x_1,x_2) \leq \gamma.\] This implies that:
\[z_1 \in \hat{\mathbf{s}},\, z_1, z_2\in C_i;\quad\Longrightarrow\quad y_1=y_2;\,\,\mathcal{A}_\mathbf{s}(x_1)=\mathcal{A}_\mathbf{s}(x_2);\quad\Longrightarrow\quad l(\mathcal{A}_{\mathbf{s}}, z_1)=l(\mathcal{A}_{\mathbf{s}}, z_2).\]
By definition, $\mathcal{A}$ is
$(2{\mathcal{N}(\gamma/2,\mathcal{X}, \rho)}, 0, \hat{n})$ pseudo
robust.

{\small
\bibliographystyle{plain}
\bibliography{Phd1}
}

\end{document}